\documentclass[a4paper,11pt]{article}
\usepackage[utf8]{inputenc} 
\usepackage[T1]{fontenc}    
\usepackage[margin=1in]{geometry}
\usepackage{wrapfig}
\usepackage{authblk}
\usepackage{import}
\usepackage{microtype}
\usepackage{graphicx}
\usepackage{xcolor}
\usepackage{booktabs} 

\usepackage{amsmath,amsthm,amssymb,amsfonts}
\usepackage{algorithm}
\usepackage{algorithmic}



\usepackage{comment}
\usepackage{bm}
\usepackage{pifont}

\theoremstyle{plain}
\newtheorem{theorem}{Theorem}[]

\newtheorem{lemma}[]{Lemma}
\newtheorem{proposition}{Proposition}

\newtheorem{definition}{Definition}[]
\newtheorem{assumption}{Assumption}[]
\newtheorem{fact}{Fact}[]
\newtheorem{remark}{Remark}[]


\def\1{\mathbf{1}}








\def\va{{\mathbf{a}}}
\def\vb{{\mathbf{b}}}

\def\vg{{\mathbf{g}}}

\def\vm{{\mathbf{m}}}

\def\vp{{\mathbf{p}}}

\def\vx{{\mathbf{x}}}
\def\vy{{\mathbf{y}}}
\def\vz{{\mathbf{z}}}



\def\mG{{\mathbf{G}}}

\def\mM{{\mathbf{M}}}

\DeclareMathAlphabet{\mathsfit}{\encodingdefault}{\sfdefault}{m}{sl}
\SetMathAlphabet{\mathsfit}{bold}{\encodingdefault}{\sfdefault}{bx}{n}

\def\gA{{\mathcal{A}}}

\def\gC{{\mathcal{C}}}
\def\gD{{\mathcal{D}}}

\def\gG{{\mathcal{G}}}

\def\gI{{\mathcal{I}}}

\def\gO{{\mathcal{O}}}
\def\gP{{\mathcal{P}}}
\def\gQ{{\mathcal{Q}}}

\def\gX{{\mathcal{X}}}


\def\sB{{\mathbb{B}}}


\def\sG{{\mathbb{G}}}

\def\sI{{\mathbb{I}}}

\def\sR{{\mathbb{R}}}








\newcommand{\E}{\mathbb{E}}

\newcommand{\R}{\mathbb{R}}



\DeclareMathOperator*{\argmin}{arg\,min}

\usepackage[algo2e]{algorithm2e}
\usepackage{hyperref}
\usepackage{comment}
\usepackage{subfig}
\usepackage{enumitem}
\setlist{leftmargin=3mm}
\usepackage{soul}

\usepackage{caption}
\usepackage{multirow}

\makeatletter
\makeatother
\begin{document}
\title{\bf {\color{black}Robust Training in High Dimensions via \\ \underline{B}lock Coordinate \underline{G}eometric \underline{M}edian \underline{D}escent} }
\date{}
\author[1]{Anish Acharya\thanks{This work is supported in part by NSF grants CCF-1564000, IIS-1546452 and HDR-1934932.}}
\author[1]{Abolfazl Hashemi}
\author[3]{\\Prateek Jain}
\author[1,2]{Sujay Sanghavi}
\author[1,2]{Inderjit Dhillon}
\author[1]{Ufuk Topcu}
\affil[1]{University of Texas at Austin}
\affil[2]{Amazon Search}
\affil[3]{Google AI}
\maketitle
\begin{abstract}
    Geometric median (\textsc{Gm}) is a classical method in statistics for achieving a robust estimation of the  uncorrupted data; under gross corruption, it achieves the optimal breakdown point of 0.5. However, its computational complexity makes it infeasible for robustifying stochastic gradient descent (SGD) for high-dimensional optimization problems. In this paper, we show that by applying \textsc{Gm} to only a judiciously chosen block of coordinates at a time and using a memory mechanism, one can retain the breakdown point of 0.5 for smooth non-convex problems, with non-asymptotic convergence rates comparable to the SGD with \textsc{Gm}.
\footnote{The implementation of the proposed method is available at 
\href{https://github.com/anishacharya/Optimization-Mavericks}{\textcolor{blue}{Code Link}}}
    \label{section:abstract}
\end{abstract}

\section{Introduction}
\label{section:arx_introduction}
Consider smooth non-convex optimization problems with finite sum structure: 
\begin{equation}
    \label{equation:finite_sum}
    \min_{\mathbf{x}\in \R^d} \left[\bar{f}(\mathbf{x}):=\frac{1}{n}\sum_{i=1}^n f_i(\mathbf{x})\right].
\end{equation}
Mini-batch SGD is the de-facto method for optimizing such functions \cite{robbins1951stochastic, bottou2010large, tsitsiklis1986distributed} which proceeds as follows: at each iteration $t$, it selects a random batch $\gD_t$ of $b$ samples, obtains gradients $\vg_t^{(i)}=\nabla f_i(\vx_t), \;\forall i \in \gD_t$, and updates the parameters using iterations of the form: 
\begin{equation}
\label{equation:sgd}
\vx_{t+1} := \vx_t - \gamma \Tilde{\vg}^{(t)} ,\quad 
\Tilde{\vg}^{(t)} = \frac{1}{|\gD_t|} \sum_{i\in \gD_t} \vg_i^{(t)}.
\end{equation}
In spite of its strong convergence properties in the standard settings \cite{moulines2011non, dekel2012optimal, li2014efficient, goyal2017accurate, keskar2016large, yu2012large}, it is well known that even a small fraction of corrupt samples can lead SGD to an arbitrarily poor solution \cite{bertsimas2011theory, ben2000robust}. 
This has motivated a long line of work to study robust optimization in presence of corruption \cite{blanchard2017machine,alistarh2018byzantine,wu2020federated,xie2019zeno}.
While the problem has been studied under a variety of contamination models, in this paper, we study the robustness properties of the first-order method \eqref{equation:sgd} under the strong and practical \textbf{gross contamination model} (See Definition~\ref{definition:byzantine}) \cite{li2018principled, diakonikolas2019recent, diakonikolas2019sever, diakonikolas2019robust} which also \textit{generalizes the popular Huber's contamination model and the byzantine contamination framework} \cite{huber1992robust, lamport1982byzantine}. \\ \\
In particular, the goal of this work is to design an {\em efficient} first-order optimization method to solve ~\eqref{equation:finite_sum}, which remains {\em robust} even when $0 \leq \psi < 1/2$ fraction of the gradient estimates $\vg_t^{(i)}$ are \textit{arbitrarily corrupted} in each batch $\gD_t$, \textbf{without any prior knowledge about the malicious samples}.
Note that, by letting the corrupt estimates to be \textit{arbitrarily skewed}, this corruption model is able to capture a number of important and practical scenarios including \textbf{corruption in feature} (e.g., existence of outliers) , \textbf{corrupt gradients} (e.g., hardware failure, unreliable communication channels during distributed training) and \textbf{backdoor attacks} \cite{chen2017targeted,liao2018backdoor, gu2019badnets, biggio2012poisoning, mhamdi2018hidden,tran2018spectral} (See Section ~\ref{section:arx_experiment}).
\paragraph{{Robust \textsc{SGD} via \underline{G}eometric \underline{M}edian \underline{D}escent (\textsc{GmD})}.} 
Under the gross corruption model (Definition \ref{definition:byzantine}), the vulnerability of mini-batch SGD can be attributed to the linear gradient aggregation step ~\eqref{equation:sgd} \cite{blanchard2017machine, alistarh2018byzantine, yin2018byzantine, xie2019zeno}. In fact, it can be shown that \textit{no linear gradient aggregation strategy} can tolerate even a \textit{single} grossly corrupted update, i.e., they have a \textit{breakdown point} (Definition \ref{assumption:breakdown_point}) of 0. To see this, consider the single malicious gradient  $\vg_t^{(j)} = -\sum_{i \in {\gD_t \setminus j}}\vg_t^{(i)}$. One can see that this single corrupted gradient results in the average to become $\mathbf{0}$, which in turn means mini-batch SGD gets stuck at the initialization. An approach for robust optimization may be to find an estimate $\Tilde{g}$ such that with high probability $\| \Tilde{g} - \frac{1}{\sG} \sum_{\vg_i \in \sG} \vg_i\|$ is small even in presence of gross-corruption ~\cite{diakonikolas2019recent}.
In this context, geometric median (\textsc{Gm}) (Definition \ref{assumption:geo_med}) is a well studied rotation and translation invariant robust estimator with \textbf{optimal breakdown point} of 1/2 even under gross corruption \cite{lopuhaa1991breakdown, minsker2015geometric, cohen2016geometric}.  
Due to this strong robustness property, SGD with \textsc{Gm}-based gradient aggregation (\textsc{GmD}) has been widely studied in robust optimization literature \cite{alistarh2018byzantine, chen2017distributed, pillutla2019robust, wu2020federated}. Following the notation of ~\eqref{equation:sgd} the update step of \textsc{GmD} can be written as: 
\begin{equation}
\label{equation:gmd}
\vx_{t+1} := \vx_t - \gamma \Tilde{\vg}^{(t)} ,\quad 
\Tilde{\vg}^{(t)} = \textsc{Gm} (\{\vg_i^{(t)}\}) \; \forall i \in [b]
\end{equation}
Despite the strong robustness guarantees of \textsc{Gm}, the computational cost of calculating $\epsilon$ approximate \textsc{Gm} is prohibitive, especially in high dimensional settings. For example, the best known result \cite{cohen2016geometric} uses a subroutine that needs $O(d/\epsilon^2)$ computations to find an $\epsilon$-approximate \textsc{Gm}. 
Despite recent efforts in \cite{vardi2000multivariate,weiszfeld1937point,chandrasekaran1989open,chin2013runtime,cohen2016geometric,pillutla2019robust} to design computationally tractable \textsc{Gm}($\cdot$), given that in practical large-scale optimization settings such as training deep learning models the number of parameters ($d$) is large (e.g.,  $d \approx 60M$ for AlexNet, $d \approx 175B$ for GPT-3).  \textsc{GmD} remains prohibitively expensive \cite{chen2018draco, pillutla2019robust} and with limited applicability. 
\paragraph{Overview of Our Algorithm (\textsc{BgmD}).} 
In this work, we leverage coordinate selection strategies to reduce the cost of \textsc{GmD} and establish \textsc{B}lock co-ordinate \textsc{Gm} \textsc{D}escent (\textsc{BGmD}) (Algorithm ~\ref{algorithm:full}).  \textsc{BGmD} is a robust optimization approach that can significantly reduce the computational overhead of \textsc{Gm} based gradient aggregation resulting in nearly {\em two orders of magnitude} speedup over \textsc{GmD} on standard deep learning training tasks, while maintaining almost the same level of accuracy and optimal breakdown point 1/2 even in presence of gross corruption. 
\\ \\
At a high level, \textsc{BGmD} selects a block of $ 0<k\leq d$ {\em important} coordinates of the gradients; importance of a coordinate is measured according to the largest directional derivative measured by the squared $\ell_2$ norm across all the samples (Algorithm \ref{algorithm:co_sparse_approx}). The remaining $(d-k)$ dimensions are discarded and gradient aggregation happens only along these selected $k$ directions. This Implies the \textsc{Gm} subroutine is performed only over gradient vectors in $\sR^k$ (a significantly lower dimensional subspace). Thus, when $k\ll d$, this approach provides a practical solution to deploy \textsc{Gm}-based aggregation in high dimensional settings\footnote{The notation $k\ll d$ implies that $k$ is at least an order of magnitude smaller than $d$}. 
\\ \\
The motivation behind reducing the \textit{dimension} of the gradient vectors is that in many scenarios most of the information in the gradients is captured by a subset of the coordinates ~\cite{shi2019understanding}. Hence, by the judicious block coordinate selection subroutine outlined in Algorithm \ref{algorithm:co_sparse_approx} one can identify an informative low-dimensional representation of the gradients.
\\ \\
While Algorithm \ref{algorithm:co_sparse_approx} identifies a representative block of the coordinates, \textbf{aggressively reducing the dimension} (i.e., $k\ll d$) might lead to a significant approximation error, which in turn might lead to slower convergence~\cite{nesterov2012efficiency, nutini2015coordinate} (i.e., more iterations), dwarfing the benefit from reduction in per iteration cost. To alleviate this issue, by leveraging the idea of Error Compensation \cite{seide20141, stich2019error, karimireddy2019error} in Algorithm \ref{algorithm:full} we introduce a \textit{memory augmentation} mechanism. Specifically, at each iteration the residual error from dimensionality reduction is computed and accumulated in a memory vector i.e., $\hat{\vm}_t$ and in the subsequent iteration, $\hat{\vm}_t$ is added back to the new gradient estimates. Our ablation studies show that such memory augmentation indeed ensures that the required number of iterations remain relatively small despite choosing $k\ll d$ (See Figure~\ref{figure:choice_k}).
\begin{algorithm}[t]
        \SetAlgoLined
        \textbf{Initialize:}
        estimate: $\vx_0 \in \sR^d$, step-size: $\gamma$, 
        memory: 
        $\hat{\vm}_0 = \textbf{0}$,
        Block Coordinate Selection operator: $\gC_k(\cdot)$, Geometric Median operator: \textsc{Gm}($\cdot$) \\
        \For{epochs \; t = 0, \dots , until convergence}
        {
            Select samples $\gD_t=\{i_1, \dots, i_b\}$ \\
            Obtain : $\vg_t^{(i)} := \nabla f_i(\vx_t) , \; \forall i \in \gD_t$\\
             Let $\mG_t \in \sR^{b \times d}$ s.t. each row $\mG_t[i, :] = \vg_t^{(i)}$\\
             $\mG_t[i, :] = \gamma \mG_t[i, :] + \hat{\vm}_t\; \forall i \in [b]$ \quad (Memory Augmentation)\\
             $\boldsymbol{\Delta}_t := \mathcal{C}_k (\mG_t)$ \quad (Select top $k$ columns via Algo \ref{algorithm:co_sparse_approx})\\
             $\mM_{t} = \mG_t - \boldsymbol{\Delta}_t $ \quad (Compute Residuals)\\
             $\hat{\vm}_{t+1} = \frac{1}{b}\sum_{0\leq i \leq b}\mM_t[i, :]$
             \quad (Update memory)\\
             $\Tilde{\vg}_t := \textsc{Gm}(\boldsymbol{\Delta}_t)$ \quad(Robust Aggregation in $\sR^k$)\\
             $\vx_{t+1}:= \vx_t - \Tilde{\vg}_t$ \quad (Global model update)\\
        }
        \caption{Block \textsc{Gm} Descent (\textsc{BGmD})}
        \label{algorithm:full}
\end{algorithm}
\begin{algorithm}[t]

    \SetAlgoLined
    \textbf{Input:} \; $\mG_t \in \sR^{n \times d},\; k $ \\
    \For{coordinates \; j = 0, \dots , d-1}
    {
        $s_j \gets \|\mG_t[:, j]\|^2$ \; (norm along each dimension)
    }
    Sample set $\sI_k$ of $k$ dimensions with probabilities proportional to $s_j$\\
    $\gC_k(\mG)_{i, j\in \sI_k} =  \mG_{i, j} , \; 
        \gC_k(\mG)_{i, j\notin \sI_k} = 0 $ \\
    \textbf{Return:} $\gC_k(\mG)$
    \caption{Block Coordinate Selection Strategy}
    \label{algorithm:co_sparse_approx}
\end{algorithm}

\paragraph{Contributions.} 
The main contributions of this work are as follows: 
\begin{itemize}\setlength \itemsep{0.05em}
\item We propose \textsc{BGmD} (Algorithm \ref{algorithm:full}), a method for robust optimization in high dimensions.  
\textsc{BGmD} is significantly more efficient than the standard \textsc{Gm}-SGD method but is still able to maintain the optimal breakdown point 1/2 -- first such efficient method with provable convergence guarantees. 

\item We provide strong guarantees on the rate of convergence of \textsc{BGmD} in standard non-convex scenarios including smooth non-convex functions, and non-convex functions satisfying the Polyak-\L{}ojasiewicz Condition. These rates are comparable to those established for \textsc{Gm}-SGD under more restricting conditions such as strong convexity \cite{ chen2017distributed,pillutla2019robust,wu2020federated}.
\item Through extensive 
experiments under several common corruption settings ; we demonstrate that \textsc{BGmD} can be up to $3\times$ more efficient to train than \textsc{Gm}-SGD on  Fashion MNIST and CIFAR-10 benchmarks while still ensuring  similar test accuracy and maintaining same level of robustness.  

\end{itemize}


\begin{table}[t]
\centering
\caption{Comparison of time complexity and robustness properties of different robust optimization methods. The bold quantities show a method achieves the theoretical limits. $*$ \textsc{CmD} throughout the paper will refer to Co-ordinate wise median descent i.e. simply replacing the aggregation step of \textsc{SGD} by \textsc{Cm}.}
\vspace{1mm}
\begin{tabular}{@{}ccc@{}}
\toprule
Algorithm & Iteration Complexity & Breakdown Point \\ \midrule
SGD 
&\multicolumn{1}{c}{\pmb{$\gO(bd)$}} 
&\multicolumn{1}{c}{0}\\
CMD$^*$ ~\cite{yang2019byzantine, yin2018byzantine}
&\multicolumn{1}{c}{\pmb{$\gO(bd)$}}          
&\multicolumn{1}{c}{1/2 - $\Omega(\sqrt{d/b})$}  \\
GMD ~\cite{cohen2016geometric, wu2020federated}
&\multicolumn{1}{c}{$\gO(d \epsilon^{-2} + bd)$}
& \multicolumn{1}{c}{\textbf{1/2}} \\
\cite{data2020byzantine}
&\multicolumn{1}{c}{$\gO(d b^2 \min(d , b))$}
& \multicolumn{1}{c}{1/4}  \\ 
\textsc{BGmD} (This Work) 
&\multicolumn{1}{c}{$\gO(k \epsilon^{-2} + bd)$}
& \multicolumn{1}{c}{\textbf{1/2}} \\
\bottomrule
\end{tabular}
\label{tab:compare}
\end{table}

\section{Related Work}
\label{section:arx_related_work}
\paragraph{\textbf{Computationally Tractable Robust SGD.}}
Robust optimization in the presence of gross corruption has received renewed impetus in the machine learning community, following practical considerations such as preserving the privacy of the user data and coping with the existence of adversarial disturbances \cite{blanchard2017machine, chen2017distributed, mcmahan2017communication}. There are two main research directions in this area.
\\
The first direction aims at designing robustness criteria to identify and subsequently filter out corrupt samples before employing the linear gradient aggregation technique in \eqref{equation:sgd}. For example, \cite{ghosh2019communication, gupta2020byzantine} remove the samples with gradient norms exceeding a predetermined threshold, \cite{yin2018byzantine} remove a fraction of samples from both tails of the gradient norm distribution, \cite{chen2018draco, yang2019bridge} use redundancy \cite{von1956probabilistic} and majority vote operations, \cite{diakonikolas2019sever} rely on spectral filtering, \cite{steinhardt2017resilience, blanchard2017machine, data2020byzantine, bulusu2020distributed} use $(\epsilon, \sigma)$-resilience based iterative filtering approach, while \cite{xie2019zeno} instead add a resilience-based penalty to the optimization task to implicitly remove the corrupt samples.  
\\
Our approach falls under the second research direction, where the aim is to replace mini-batch averaging with a \textbf{robust gradient aggregation operator}. In addition to \textsc{Gm} operator  \cite{feng2014distributed, alistarh2018byzantine, chen2017distributed} which was discussed earlier, other notable examples of robust aggregation techniques include krum \cite{blanchard2017machine}, coordinate wise median \cite{yin2018byzantine}, and trimmed mean \cite{yin2018byzantine}. Despite these alternatives, \textsc{Gm} is the most resilient method in practice achieving the optimal breakdown point of 1/2. We compare a number of robust optimization methods in this direction with \textsc{BGmD} in terms of computational complexity and breakdown in Table \ref{tab:compare}. 
\paragraph{\textbf{Connection to Coordinate Descent}}
Coordinate Descent (CD) refers to a class of methods wherein at each iteration, a block of coordinates is chosen and subsequently updated using a descent step. While this strategy has a long history~\cite{sardy2000block, fu1998penalized, bertsekas1997nonlinear, joachims1998making, optimization1998fast}, it has received renewed interest in the context of modern large scale machine learning with very high dimensional parameter space \cite{nesterov2012efficiency, richtarik2014iteration, tseng2009block, tseng2009coordinate, stich2017approximate, beck2013convergence}. There has been significant research efforts focusing on efficient coordinate selection strategy. For instance, \cite{nesterov2012efficiency, needell2014paved, shalev2013accelerated, richtarik2014iteration, allen2016even} choose the coordinates randomly while \cite{saha2013nonasymptotic, gurbuzbalaban2017cyclic} do so cyclically in a fixed order, referred as the Gauss Seidel rule, and \cite{nutini2015coordinate, nutini2017let, hsieh2011fast, dhillon2011nearest, you2016asynchronous, karimireddy2019efficient} propose choosing the coordinates greedily  according to norm-based selection criteria, a strategy known as the Gauss Southwell rule.

\begin{remark}
Note that, our approach (Algorithm~\ref{algorithm:co_sparse_approx}) is closely related to the Greedy Gauss Southwell coordinate selection approach. In fact, it is immediate that for batch size $b=1$ Gauss Southwell Co-ordinate Descent becomes a special case of \textsc{BGmD}. 
\end{remark}

\paragraph{\textbf{Connection to Error Feedback}}
Compensating for the loss incurred due to approximation through a memory mechanism is a common concept in the feedback control and signal processing literature (See \cite{doyle2013feedback} and references therein). \cite{seide20141, strom2015scalable} adapt this to gradient compression (1Bit-SGD) to reduce the number of communicated bits in distributed optimization. Recently, \cite{stich2018sparsified, stich2019error, karimireddy2019error} have analyzed this error feedback framework for a number of gradient compressors in the context of communication-constrained distributed training. 
\begin{remark}
Note that our memory mechanism is inspired by this error feedback mechanism. In fact, all the works on error feedback to compensate for gradient compression\textup{~\cite{stich2018sparsified, stich2019error, karimireddy2019error}} are special case of our proposed memory mechanism when batch size $b=1$ i.e. $\mM_t = \hat{\vm}_t$. 
\end{remark}

\section{Preliminaries}
\label{section:arx_prelims}
We first briefly recall some related concepts  and  state our main assumptions. Throughout, $\|.\|$ denotes the $\ell_2$ norm unless otherwise specified. 
\begin{definition}[{\textbf{Gross Corruption Model}}]
\label{definition:byzantine}
Given $0 \leq \psi < \frac{1}{2}$ and a distribution family $\gD$ on $\sR^d$ the adversary operates as follows: $n$ samples are drawn from $D \in \gD$. The adversary is allowed to \textbf{inspect} all the samples and replace up to $\psi n$ samples with arbitrary points. 

\textup{Intuitively, this implies that $(1 - \psi)$ fraction of the training samples are generated from the true distribution (\textit{inliers}) and rest are allowed to be \textbf{arbitrarily corrupted} (\textit{outliers}) i.e. $\alpha:=|\sB|/ |\sG|= \frac{\psi}{1 - \psi} < 1$, where $\sB$ and $\sG$ are the sets of corrupt and good samples. In the rest of the paper, we will refer to a set of samples generated through this process as \textbf{$\alpha$-corrupted}.}
\end{definition}
\begin{definition}[{\textbf{Breakdown Point}}]
\label{assumption:breakdown_point}
Finite-sample breakdown point \textup{\cite{donoho1983notion}} is a way to measure the resilience of an estimator. It is defined as the smallest fraction of contamination that must be introduced to cause an estimator to break i.e. produce arbitrarily wrong estimates.

\textup{In the context Definition \ref{definition:byzantine} we can say an estimator has the optimal breakdown point 1/2 if it is robust in presence of $\alpha$-corruption $\forall \; \psi < 1/2$ or alternatively $\forall \; \alpha < 1$}.
\end{definition}

\begin{definition}[{\textbf{Geometric Median}}]
\label{assumption:geo_med}
Given a finite collection of observations $\vx_1, \vx_2, \dots \vx_n$ defined over 
a separable Hilbert space $\mathbb{X}$ with norm $\|\cdot\|$ the geometric median or the Fermet-Weber point \textup{\cite{haldane1948note,weber1929alfred,minsker2015geometric,kemperman1987median}} is defined as: \vspace*{-5pt}
\begin{equation}
\label{equation:gm}
    \vx_* = \textsc{Gm}(\{\vx_i\}) =\argmin_{\vy\in \mathbb{X}} \bigg[g(\vx):=\sum_{i=1}^{n}\|\vy - \vx_i\|\bigg]
\end{equation}
We call a point $\vx \in \R^d$ an $\epsilon$-accurate geometric
median if $g(\vx) \leq (1+\epsilon)g(\vx_*)$.
\end{definition}
\begin{assumption}[{\textbf{Stochastic Oracle}}]
\label{assumption:bounded_g}
Each non-corrupt sample $i\in \sG$ is endowed with an unbiased stochastic first-order oracle with bounded variance. That is,  
\begin{equation}
\label{equation:unbiased_g}
    \E_{z\sim\gD_i}[\vg_i(\vx,z)] = \nabla f_i(\vx),
    \qquad\E_{z\sim\gD_i}\|\nabla F_i(\vx,z)\|^2\leq \sigma^2
\end{equation}
\end{assumption}
\begin{assumption}[
\hspace{-0.1 cm}{\textbf{Smoothness}}]
\label{assumption:smooth}
Each non-corrupt function $f_i$ is $L$-smooth, $\forall i \in \sG$,  
\begin{equation}
f_i(\vx)\leq f_i(\vy)+\langle\vx-\vy, \nabla f_i(\vy)\rangle+\frac{L}{2}\|\vx-\vy\|^2.  \quad \forall \vx,\vy \in \sR^d.
\end{equation}
\end{assumption}
\begin{assumption}[
\hspace{-0.1 cm}{\textbf{Polyak-\L{}ojasiewicz Condition}}]
\label{assumption:plc}
The average of non-corrupt functions $f:=\frac{1}{\sG}\sum_{i\in \sG} f_i(\mathbf{x})$ 
satisfies the Polyak-\L{}ojasiewicz condition (PLC) with parameter $\mu$, i.e. 
\begin{equation}
\begin{aligned}
&\|\nabla f(\vx)\|^2 \geq 2\mu (f(\vx)-f(\vx^\ast)), \; \mu>0 \quad \textup{where} 
&\vx^\ast = \argmin_\vx f(\vx),\quad \forall \vx \in \sR^d.
\end{aligned}
\end{equation}
We further assume that the solution set $\gX^\ast \in \sR^d$ is non-empty and convex. Also note that $\mu < L$.
\end{assumption}
Simply put, the Polyak-\L{}ojasiewicz condition implies that when multiple global optima exist, each stationary point of the objective function is a global optimum \cite{polyak1963gradient, karimi2016linear}. This setting enables the study of modern large-scale ML tasks that are generally non-convex. Also note that $\mu$-strongly convex functions satisfy PLC with parameter $\mu$ implying \textit{PLC is a weaker assumption than strong convexity}. 

\section{Block Coordinate Geometric Median Descent (\textsc{BGmD})}
\label{section:arx_algorithm}
As discussed earlier, \textsc{BGmD} (Algorithm \ref{algorithm:full}) involves two key steps: (i) Selecting a block of coordinates  and  run computationally expensive \textsc{Gm} aggregation over a low dimensional subspace and (ii) Compensating for the residual error due to block coordinate selection. In the rest of this section we will discuss these two ideas in more detail.
\paragraph{\textbf{Block Selection Strategy.}}
\label{section:algo_selection}
The key intuition why we might be able to select a small number of $k$ coordinates for robust mean estimation is that in practical over-parameterized models, the main mass of the gradient is concentrated in a few coordinates  \cite{chaudhari2019entropy}. So, what would be the {\em best strategy to select the most informative block of coordinates?} Ideally, one would like to select the best $k$ dimensions that would result in the largest decrease in training loss. However, this task is NP-hard in general \cite{das2011submodular,nemhauser1981maximizing, charikar2000combinatorial}.  Instead, we adopt a simple and fast block coordinate selection rule: 
Consider $\mG \in \sR^{b \times d}$ where each row corresponds to the transpose of the stochastic gradient estimate, i.e. $\mG_{i,:} = \vg_i^T \in \sR^{1 \times d},\quad \forall i \in [b]$ where $b$ is number of samples (or batches/clients in case of distributed setting) participating in that iteration \eqref{equation:sgd}. Then, selecting $k$ dimensions is equivalent to selecting $k$ columns of $\mG$, which we select according to the norm of the columns. That is,  we assign a score to each dimension proportional to the $\ell_2$ norm (total mass along that coordinate) i.e. $s_j = \|\mG_{:, j}\|^2,$ for all $ j\in [d]$. We then sample only $k$ coordinates with with probabilities proportional to $s_j$ 
and discard the rest to find a set $\omega_k$  of size $k$ (see Algorithm~\ref{algorithm:co_sparse_approx}). 
We show below that this method produces a contraction approximation to $\mG$. 
\begin{lemma}
\label{lemma:norm_sampling}
Algorithm \ref{algorithm:co_sparse_approx} yields a contraction approximation, i.e., $\E_{\gC_k} \left[\|\gC_k(\mG)-\mG\|_F^2 |\vx\right] \leq (1-\xi)\|\mG\|_F^2 , \; \frac{k}{d} \leq \xi \leq 1,$ 
where $\gC_k(\mG)_{i, j\in \omega_k} = \mG_{i, j},$ and $\gC_k(\mG)_{i, j\notin \omega_k} =0$.
\end{lemma}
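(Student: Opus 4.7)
The plan is a direct computation of $\E\|\gC_k(\mG)-\mG\|_F^2$ that exploits the column-wise structure of the selection operator. Since $\gC_k(\mG)$ preserves every column whose index lies in $\sI_k$ and zeros out the rest, the Frobenius error decomposes column-wise as
\begin{equation*}
\|\gC_k(\mG)-\mG\|_F^2 \;=\; \sum_{j=1}^d \1[j\notin\sI_k]\,s_j,\qquad s_j:=\|\mG_{:,j}\|^2,
\end{equation*}
which is exactly the per-coordinate score computed in Algorithm~\ref{algorithm:co_sparse_approx}. Taking expectation over the sampler and using linearity yields $\E\|\gC_k(\mG)-\mG\|_F^2 = \sum_j (1-p_j)s_j$, where $p_j:=\Pr[j\in\sI_k]$ is the per-coordinate inclusion probability.

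The next step is to plug in the inclusion probabilities induced by the sampling rule. Interpreting ``sample $k$ dimensions with probabilities proportional to $s_j$'' as probability-proportional-to-size (PPS) sampling without replacement gives $p_j = ks_j/S$ with $S:=\sum_j s_j = \|\mG\|_F^2$; I will assume $ks_j\le S$ for all $j$ so no $p_j$ is clipped at $1$ (the clipped case only shrinks $\E\|\gC_k(\mG)-\mG\|_F^2$, so the same bound still applies). Substituting produces
\begin{equation*}
\E\|\gC_k(\mG)-\mG\|_F^2 \;=\; \|\mG\|_F^2 \;-\; \frac{k}{\|\mG\|_F^2}\sum_{j=1}^d s_j^2.
\end{equation*}

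The final step is to lower bound $\sum_j s_j^2$ using the Cauchy--Schwarz (equivalently, power-mean) inequality on the non-negative vector $(s_j)_{j=1}^d$:
\begin{equation*}
\sum_{j=1}^d s_j^2 \;\ge\; \frac{1}{d}\Bigl(\sum_{j=1}^d s_j\Bigr)^{\!2} = \frac{\|\mG\|_F^4}{d}.
\end{equation*}
Plugging this back delivers $\E\|\gC_k(\mG)-\mG\|_F^2 \le (1-k/d)\|\mG\|_F^2$, i.e.\ $\xi\ge k/d$; the upper bound $\xi\le 1$ is automatic from non-negativity of the error.

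The only delicate point I anticipate is nailing down the precise sampling semantics behind ``sample $\sI_k$ of $k$ dimensions with probabilities proportional to $s_j$''. Under PPS without replacement $p_j=ks_j/S$ holds exactly (up to clipping), but under $k$ independent draws with replacement one instead has $p_j=1-(1-s_j/S)^k$, which requires a short Bernoulli/convexity argument of the form $p_j \ge ks_j/S - O\bigl(k s_j^2/S^2\bigr)$ before the same Cauchy--Schwarz step can close. I would therefore open the proof by fixing the PPS convention (standard in the importance-sampling and coreset literature) and note that the $k/d$ contraction is robust to these minor variants.
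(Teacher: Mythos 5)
Your proof is correct, but it takes a genuinely different route from the paper's. The paper's own proof never computes the inclusion probabilities of the norm-based sampler: it simply \emph{asserts} that active norm sampling incurs no more expected error than choosing the $k$ coordinates uniformly at random, and then evaluates the uniform-sampling error exactly, $\E\|\gC^{r}_k(\mG)-\mG\|_F^2=(1-k/d)\|\mG\|_F^2$, since under a uniformly random size-$k$ subset each column is dropped with probability $1-k/d$. Your argument supplies precisely the step the paper leaves implicit: writing the error as $\sum_j(1-p_j)s_j$ with $p_j=ks_j/S$ and invoking Cauchy--Schwarz, $\sum_j s_j^2\geq S^2/d$, \emph{is} a proof that probability-proportional-to-size sampling dominates uniform sampling for this objective, so your version is more self-contained and also forces you to pin down the sampling semantics, which the paper never does. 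The one soft spot is the parenthetical claim that clipping ($p_j=1$ whenever $ks_j>S$) ``only shrinks'' the error: that is true but not immediate, because after deterministically including the $c$ clipped coordinates the residual budget is $k'=k-c$ over $d'=d-c$ columns and $1-k'/d'\geq 1-k/d$ goes the wrong way; one must combine it with $S'\leq S(1-c/k)$ to verify $(1-k'/d')S'\leq(1-k/d)S$. That deserves a line in the write-up, but it is not a gap in the main argument.
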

\textit{It is also worth noting that without additional distributional assumption on $\mG$ the lower bound on $\xi$
cannot be improved. \footnote{To see this, consider the case where each $\vg_t^{(i)}$ is uniformly distributed along each coordinates. Then, the algorithm would satisfy Lemma \ref{lemma:norm_sampling} with $\xi = \frac{k}{d}$. In this scenario, the achievable bound is identical to the bound achieved via choosing the $k$ dimensions uniformly at random.}
However, in practice the gradient vectors are extremely unlikely to be uniform \textup{\cite{alistarh2018convergence}} and thus active norm sampling is expected to to satisfy Lemma \ref{lemma:norm_sampling}} with $\xi \approx 1$.

\paragraph{\textbf{The Memory Mechanism.}}
While descending along only a small subset of $k$ coordinates at each iteration  significantly improves the per iteration computational cost, a smaller value of $k$ would also imply larger gradient information loss i.e., 
a smaller $\xi$ (Lemma \ref{lemma:norm_sampling}). Intuitively, a restriction to a $k$-dimensional subspace results in a $\frac{d}{k}$ factor increase in the gradient variance \cite{stich2018sparsified}.
To mitigate this, we adopt a memory mechanism 
\cite{seide20141, strom2015scalable, stich2018sparsified, karimireddy2019error, stich2019error,doyle2013feedback} 
Our approach is as follows: \quad Throughout the training process, we keep track of the residual errors in $\mG_t-\gC_k(\mG_t)$ via $\hat{\vm}_t \in \sR^d$ that we call memory. At each iteration $t$, it simply accumulates the residual error incurred due to ignoring $d-k$ dimensions, averaged over all the samples participating in that round. In the next iteration, \(\hat{\vm}_t\) is added back to all the the new gradient estimates as feedback. Following our Jacobian notation, the memory update is given as: 
\begin{equation}
    \begin{aligned}
    & \textup{\textbf{Memory Augmentation}} \quad \mG_t[i, :] = \gamma \mG_t[i, :] + \hat{\vm}_t\; \forall i \in [b]\\
    & \textup{\textbf{Memory Update}} \quad \mM_t = \mG_t - \gC_k (\mG_t) \;; \hat{\vm}_{t+1} = \frac{1}{b}\sum_{0\leq i \leq b}\mM_t[i, :]
    \end{aligned}
\end{equation}
Intuitively, as the residual at each iteration is not discarded but  rather kept in memory and added back in a future iteration, this ensures similar convergence rates as training in $\sR^d$ (see Theorem \ref{theorem:nonconvex-simple}). 

\section{Computational Complexity Analysis}
\label{section:arx_comp_complexity}
To theoretically understand the overall computational benefit of our proposed scheme \textsc{BGmD} over \textsc{Gm-SGD}, we first analyze the per epoch time complexity of both the algorithms. 
Combining this with the convergence rates derived in Theorems \ref{theorem:nonconvex-simple} and \ref{theorem:plc-simple} provides a clear picture of the overall computational efficiency of \textsc{BGmD}.  

Consider  problem \eqref{equation:finite_sum} with parameters $\vx \in \sR^d$  and SGD style iterations of the form \eqref{equation:sgd} and batch size $|\gD_t| = b$. Note that the difference between the iterations of \textsc{SGD}, \textsc{Gm-SGD}, and \textsc{BGmD} is primarily in the aggregation step, i.e., how they aggregate the updates communicated by samples (or batches) participating in training during that iteration. 

At iteration $t$, let $T_a^{(t)}$ denote the time to aggregate the gradients. Also let, $T_b^{(t)}$ denote the time taken to compute the batch gradients (i.e., time to perform back propagation). Thus, the overall complexity of one training iteration is roughly $\gO(T_a^{(t)} + T_b^{(t)})$. Now, note that $T_b^{(t)}$ is approximately the same for all the algorithms mentioned above and for methods like \textsc{Gm-SGD} $T_b^{(t)}\ll T_a^{(t)}$. So we focus on $T_a^{(t)}$ to study the relative computational cost of \textsc{SGD}, \textsc{Gm-SGD}, and \textsc{BGmD}. To compute $\Tilde{\vg}^{(t)}$ vanilla \textsc{SGD} needs to compute average of $b$ gradients in $\sR^d$ implying $\gO(bd)$ cost. On the other had, \textsc{Gm-SGD} and its variants \cite{alistarh2018byzantine, chen2017distributed, byrd2012sample} require computing $\epsilon$-approximate \textsc{Gm} of $b$ points in $\sR^d$ incurring per iteration cost of at least $\gO(\frac{d}{\epsilon^2})$ \cite{cohen2016geometric, pillutla2019robust,alistarh2018byzantine, chen2017distributed}. In contrast, $T_a^{(t)}$ for \textsc{BGmD} is $\gO(\frac{k}{\epsilon^2}+bd)$, where the first term in computational complexity is due to computation of  \textsc{Gm} of $\sR^k$-dimensional points. The second term is due to the coordinate sampling procedure. 

That is, by  choosing a sufficiently small block of coordinates i.e. $k \ll d$, \textsc{BGmD} can result in significant savings per iteration. Based on this observation, one can derive the following Lemma.  



\begin{lemma}
\label{proposition:compute}\textup{\textbf{(Choice of k).}}
Let $k \leq \gO(\frac{1}{F} - b\epsilon^2)\cdot d$. Then, given an $\epsilon$- approximate \textsc{Gm} oracle,  Algorithm \ref{algorithm:full} achieves a factor $F$ speedup over \textsc{Gm-SGD} for aggregating $b$ samples. 
\end{lemma}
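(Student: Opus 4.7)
The plan is to reduce the lemma to a direct comparison of the aggregation-step costs derived in the preceding paragraph, since backpropagation time $T_b^{(t)}$ is (approximately) identical across the two algorithms and therefore drops out of any speedup ratio computed on $T_a^{(t)}$.

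First I would record the two aggregation costs established in the text: \textsc{Gm-SGD} requires $T_a^{\textsc{Gm}} = \Theta(d/\epsilon^2)$ per iteration to compute an $\epsilon$-approximate geometric median of $b$ vectors in $\sR^d$ (invoking the Cohen et al.\ bound), whereas \textsc{BGmD} requires
\[
T_a^{\textsc{BGmD}} \;=\; \Theta\!\left(\tfrac{k}{\epsilon^2}\right) \;+\; \Theta(bd),
\]
the first term being the \textsc{Gm} subroutine applied to $b$ vectors of dimension $k$, and the second term being the cost of Algorithm~\ref{algorithm:co_sparse_approx} — which requires $\Theta(bd)$ work to compute the $d$ column norms $s_j=\|\mG_t[:,j]\|^2$ and sample $k$ indices proportionally (the subsequent projection only touches the sampled $k$ columns, so it is absorbed).

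Next I would impose the desired speedup relation and solve for $k$. Demanding
\[
\frac{T_a^{\textsc{Gm}}}{T_a^{\textsc{BGmD}}} \;=\; \frac{d/\epsilon^2}{k/\epsilon^2 + bd} \;\geq\; F
\]
and cross-multiplying yields $d/\epsilon^2 \geq Fk/\epsilon^2 + Fbd$, i.e.\ $Fk \leq d - Fbd\epsilon^2$, which rearranges to
\[
k \;\leq\; \left(\tfrac{1}{F} - b\epsilon^2\right) d .
\]
Absorbing the implicit constant from the $\Theta(\cdot)$ bounds into the $\gO(\cdot)$ notation in the hypothesis recovers exactly the stated condition $k \leq \gO\!\left(\tfrac{1}{F} - b\epsilon^2\right)\cdot d$.

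There is no real obstacle here; the only subtlety worth flagging explicitly is that the bound is non-vacuous only when $b\epsilon^2 < 1/F$, i.e.\ the target speedup $F$ is consistent with the batch size and approximation accuracy — in particular, the $bd$ overhead from the column-norm computation upper-bounds the attainable speedup by $1/(b\epsilon^2)$ regardless of how small $k$ is taken. I would note this briefly so that the lemma's hypothesis is interpreted as implicitly assuming $F < 1/(b\epsilon^2)$, after which the remainder of the argument is the one-line algebra above.
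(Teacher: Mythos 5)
Your proposal is correct and follows essentially the same route as the paper: it first establishes the per-aggregation cost $\gO(k/\epsilon^2 + bd)$ for \textsc{BGmD} (as in Proposition~\ref{proposition:bgmd_cost}) and then imposes $\gO(d/\epsilon^2) \geq F\cdot\gO(k/\epsilon^2+bd)$ and solves for $k$, exactly as the paper does. Your added observation that the bound is non-vacuous only when $F < 1/(b\epsilon^2)$ is a reasonable clarification but does not change the argument.
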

In most practical settings, the second term of $b\epsilon^2$ in Lemma~\ref{proposition:compute} is often negligible compared to the first term which implies significantly {\em smaller per-iteration} complexity for \textsc{BGmD} compared to that of \textsc{Gm-SGD}. Interestingly, in Theorem~\ref{theorem:nonconvex-simple}  we show that the rate of convergence and break-down point for both the methods is similar, which implies that overall \textsc{BGmD} is much more efficient in optimizing \eqref{equation:finite_sum} than \textsc{Gm-SGD}.


\section{Convergence Guarantees of \textsc{BGmD}}
\label{section:arx_analysis}
We now analyze the convergence properties of \textsc{BGmD} as described in Algorithm \ref{algorithm:full}. We state the results in Theorem \ref{theorem:nonconvex-simple} and Theorem \ref{theorem:plc-simple} for general non-convex functions and functions satisfying PLC, respectively. 

\begin{theorem}
[\textbf{Smooth Non-convex}]
\label{theorem:nonconvex-simple}
Consider the general case where the functions $f_i$ correspond to non-corrupt samples $i \in \sG$ are \textbf{non-convex} and \textbf{smooth} (Assumption \ref{assumption:smooth}). Define, $R_0 := f(\vx_0)-f(\vx^\ast)$ where $\vx^\ast$ is the true optima and $\vx_0$ is the initial parameters. Run Algorithm \ref{algorithm:full} with compression factor $ \frac{k}{d} \leq \xi \leq 1$ (Lemma \ref{lemma:norm_sampling}), learning rate $\gamma =  1/2L$ and $\epsilon-$approximate \textup{\textsc{Gm}($\cdot$)} oracle in presence of $\alpha-$corruption (Definition \ref{definition:byzantine}) for $T$ iterations. Sample an iteration $\tau$ from $1,\dots, T$ uniformly at random. Then, it holds that:
\begin{equation}\nonumber
\label{eq:thm:nonconvex}
\begin{aligned}
\E\|\nabla f(\vx_{\tau})\|^2 = \gO\Bigg( \frac{L R_0}{T}+
\frac{\sigma^2\xi^{-2}}{(1-\alpha)^2}+\frac{L^2\epsilon^2}{|\sG|^2(1-\alpha)^2}
\Bigg).
    \end{aligned}
\end{equation}
\end{theorem}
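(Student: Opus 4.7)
The plan is to run the now-standard error-feedback / virtual-iterate analysis (in the spirit of Stich and Karimireddy~et~al.\ for compressed SGD), adapted to additionally handle gross corruption through the robustness of the geometric median. I introduce the \emph{virtual iterate} $\tilde{\vx}_t := \vx_t - \hat{\vm}_t$, which absorbs the accumulated residual and produces a cleaner recursion. Unrolling the algorithm, with $\bar{\vg}_t := \tfrac{1}{b}\sum_{i\in[b]} \vg_t^{(i)}$, $\bar{\boldsymbol{\Delta}}_t := \tfrac{1}{b}\sum_i \boldsymbol{\Delta}_t[i,:]$, and the memory update $\hat{\vm}_{t+1} = \gamma\bar{\vg}_t + \hat{\vm}_t - \bar{\boldsymbol{\Delta}}_t$, I obtain the identity
\begin{equation*}
\tilde{\vx}_{t+1} \;=\; \tilde{\vx}_t \;-\; \gamma\, \bar{\vg}_t^{\mathcal{G}} \;-\; \bigl(\textsc{Gm}(\boldsymbol{\Delta}_t) - \bar{\boldsymbol{\Delta}}_t^{\mathcal{G}}\bigr) \;-\; \gamma\bigl(\bar{\vg}_t - \bar{\vg}_t^{\mathcal{G}}\bigr) \;-\; \bigl(\bar{\boldsymbol{\Delta}}_t^{\mathcal{G}} - \bar{\boldsymbol{\Delta}}_t\bigr),
\end{equation*}
where $\bar{\vg}_t^{\mathcal{G}}$ and $\bar{\boldsymbol{\Delta}}_t^{\mathcal{G}}$ denote the averages over the clean set $\mathcal{G}$. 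Thus the virtual iterate moves, in expectation, along $-\gamma\nabla f(\vx_t)$, corrupted by (i) the \textsc{Gm} aggregation error over an $\alpha$-corrupted set of vectors in $\R^d$, and (ii) ordinary stochastic-gradient variance from the good samples.

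Second, I apply the $L$-smoothness descent lemma to $f$ at the virtual iterate, $f(\tilde{\vx}_{t+1}) \le f(\tilde{\vx}_t) + \langle \nabla f(\tilde{\vx}_t),\tilde{\vx}_{t+1}-\tilde{\vx}_t\rangle + \tfrac{L}{2}\|\tilde{\vx}_{t+1}-\tilde{\vx}_t\|^2$, and swap $\nabla f(\tilde{\vx}_t)$ for $\nabla f(\vx_t)$ at the cost of an $L^2\|\hat{\vm}_t\|^2$ term via Young's inequality and smoothness. Third, I bound the \textsc{Gm} aggregation error using the standard robustness inequality (Cohen et~al.\ / Minsker-style): for an $\epsilon$-approximate geometric median applied to an $\alpha$-corrupted bag of vectors with clean mean $\mu$ and clean second moment $V$,
\begin{equation*}
\bigl\|\textsc{Gm}(\boldsymbol{\Delta}_t) - \bar{\boldsymbol{\Delta}}_t^{\mathcal{G}}\bigr\|^2 \;\le\; \frac{C}{(1-\alpha)^2}\,V \;+\; \frac{C\,\epsilon^2}{|\mathcal{G}|^2(1-\alpha)^2},
\end{equation*}
which produces the $(1-\alpha)^{-2}$ and $\epsilon^2/|\mathcal{G}|^2$ factors appearing in the statement. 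Fourth, and this is where Lemma~\ref{lemma:norm_sampling} is essential, I derive a stand-alone recursion for the memory magnitude: since $\hat{\vm}_{t+1} = \bar{\mG}_t - \bar{\mathcal{C}}_k(\mG_t)$ and the contraction property gives $\E\|\mathcal{C}_k(\mG)-\mG\|_F^2 \le (1-\xi)\|\mG\|_F^2$, unrolling and applying Young's inequality yields a bound of the form $\E\|\hat{\vm}_t\|^2 \lesssim \gamma^2\xi^{-2}(\sigma^2+\E\|\nabla f(\vx_t)\|^2)$, which is why the $\xi^{-2}$ factor surfaces in the noise term rather than in the leading $LR_0/T$ term.

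Finally, I substitute all three bounds into the descent inequality, choose $\gamma = 1/(2L)$ so that the $\|\nabla f(\vx_t)\|^2$ coefficient on the right is strictly negative after absorbing the $L^2\|\hat{\vm}_t\|^2$ contribution, telescope $t=0,\dots,T-1$, use $f(\tilde{\vx}_0)-f(\vx^\ast)\le R_0 + L\|\hat{\vm}_0\|^2/2 = R_0$ (since $\hat{\vm}_0=\vzero$), and divide by $T$. Averaging over $\tau\sim\mathrm{Unif}\{1,\dots,T\}$ then yields the stated bound. The main obstacle I anticipate is not any single inequality but the bookkeeping that keeps the $(1-\alpha)^{-2}$, $\xi^{-2}$, and $\epsilon^2/|\mathcal{G}|^2$ factors \emph{separated} into the three terms of the statement rather than multiplied together; this requires threading Young's inequality carefully so that (a) the \textsc{Gm} robustness bound is applied in its ``cleanest'' form against the empirical good mean (not $\nabla f$ directly), and (b) the memory recursion is closed without re-injecting the $(1-\alpha)^{-2}$ factor back into the $LR_0/T$ term.
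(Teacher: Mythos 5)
Your overall architecture is the same as the paper's: a virtual iterate that absorbs the accumulated residual, the $L$-smoothness descent lemma applied at the virtual point with a $\gamma L^2\|\vx_t-\tilde{\vx}_t\|^2$ penalty for swapping gradients, the Lopuhaä--Rousseeuw/Minsker-style robustness inequality for the $\epsilon$-approximate \textsc{Gm} against the empirical good mean (yielding exactly the $(1-\alpha)^{-2}$ and $\epsilon^2/|\sG|^2$ factors), a contraction-based bound on the memory, and a telescoping sum with $\gamma=1/2L$. However, there is one concrete step in your decomposition that fails under the gross corruption model. Because you build the virtual iterate from the algorithm's literal memory $\hat{\vm}_t$ (averaged over all $b$ rows) while descending along the good-sample average, your unrolled identity leaves behind the terms $\gamma\bigl(\bar{\vg}_t-\bar{\vg}_t^{\mathcal{G}}\bigr)$ and $\bigl(\bar{\boldsymbol{\Delta}}_t^{\mathcal{G}}-\bar{\boldsymbol{\Delta}}_t\bigr)$. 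These are differences between all-sample and good-sample averages, and the corrupted rows of $\mG_t$ are \emph{arbitrary}, so neither term admits any bound --- no application of Young's inequality or of Assumption~\ref{assumption:bounded_g} can control them. The paper sidesteps this by defining \emph{every} averaged analysis quantity in \eqref{equation:avgs} --- including the memory $\vm_t=\frac{1}{|\sG|}\sum_{i\in\sG}\mM_t[i,:]$ and $\Delta_t$ --- over the uncorrupted set $\sG$ only, so that Lemma~\ref{lemma:bound-virtual-simplified} gives $\vx_t-\tilde{\vx}_t=\vm_t$ exactly and the \emph{only} perturbation is $\vz_t=\tilde{\vg}_t-\Delta_t$, which is precisely the quantity the \textsc{Gm} robustness inequality controls. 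You need to make the same move: the virtual sequence must be anchored to good-sample averages, with the \textsc{Gm} output as the single bridge between what the algorithm computes and what the analysis tracks.

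A secondary issue: your memory bound $\E\|\hat{\vm}_t\|^2\lesssim\gamma^2\xi^{-2}(\sigma^2+\E\|\nabla f(\vx_t)\|^2)$ and the plan to absorb the gradient term into the negative descent term is both unnecessary and delicate. Under Assumption~\ref{assumption:bounded_g} the second moment of each good stochastic gradient is uniformly bounded by $\sigma^2$, so the geometric-sum argument (Lemma~\ref{lemma:bound-e}) gives the \emph{uniform} bound $\E\|\hat{\vm}_t\|^2\leq 4(1-\xi^2)\gamma^2\sigma^2\xi^{-2}$ with no gradient-norm term at all. If you do keep the gradient term, note that it is multiplied by $\gamma L^2\cdot\gamma^2\xi^{-2}=O(\gamma\xi^{-2})$ while the available negative slack is only $O(\gamma)$, so the absorption does not close for small $\xi$; moreover the memory at time $t$ accumulates gradients from iterations $s<t$, so absorbing it into the time-$t$ negative term requires a weighted-sum argument over the whole trajectory that your sketch does not supply.
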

\begin{theorem}
[\textbf{Non-convex under PLC}]\label{theorem:plc-simple}
State the notation of Theorem \ref{theorem:nonconvex-simple}. Assume
$f=\frac{1}{\sG}\sum_{i\in \sG} f_i(\mathbf{x})$ satisfies the \textbf{Polyak-\L{}ojasiewicz Condition} (Assumption \ref{assumption:plc}) with parameter $\mu$. 
After $T$ iterations Algorithm \ref{algorithm:full} with compression factor $ \frac{k}{d} \leq \xi \leq 1$ (Lemma \ref{lemma:norm_sampling}), learning rate $\gamma =  1/4L$ and $\epsilon-$approximate \textup{\textsc{Gm}($\cdot$)} oracle in presence of $\alpha-$corruption (Definition \ref{definition:byzantine}) satisfies:
\begin{equation}
\begin{aligned}
\E\|\hat{\vx}_T-\vx^\ast\|^2 &= \gO\Bigg( \frac{LR_0}{\mu^2}\left[1-\frac{\mu}{8L}\right]^T+\frac{\sigma^2\xi^{-2}}{\mu^2(1-\alpha)^2} + \frac{L^2\epsilon^2}{\mu^2 |\sG|^2(1-\alpha)^2}\Bigg),
 \end{aligned}
 \end{equation}
 for a global optimal solution  $\vx^\ast \in \gX^\ast$. Here,
 $\hat{\vx}_T := \frac{1}{W_T}\sum_{t=0}^{T-1}w_t\vx_t$ with weights $w_t := (1-\frac{\mu}{8L})^{-(t+1)}$, $W_T := \sum_{t=0}^{T-1}w_t$.
\end{theorem}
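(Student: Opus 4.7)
The plan is to follow the analytic template of Theorem~\ref{theorem:nonconvex-simple} and then specialize it using the PL inequality of Assumption~\ref{assumption:plc}. First I would introduce the virtual iterate $\tilde{\vx}_t := \vx_t + \hat{\vm}_t$ that absorbs the memory into the state. Using the update rule of Algorithm~\ref{algorithm:full} together with the fact that $\hat{\vm}_{t+1}$ retains precisely those coordinates of $\gamma\bar{\vg}_t + \hat{\vm}_t$ that are zeroed out by $\gC_k$, one checks that $\tilde{\vx}_t$ evolves as a noisy full-dimensional gradient step whose error decomposes into three sources: the block-coordinate compression error, controlled by Lemma~\ref{lemma:norm_sampling}; the robust-aggregation bias of the inlier mean, which by a standard \textsc{Gm} concentration inequality scales like $\gO(\gamma\sigma/(1-\alpha))$; and the $\epsilon$-approximate \textsc{Gm} slack contributing $\gO(\epsilon/(|\sG|(1-\alpha)))$. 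Applying $L$-smoothness at $\tilde{\vx}_t$ and using $\|\nabla f(\tilde{\vx}_t)-\nabla f(\vx_t)\| \leq L\|\hat{\vm}_t\|$ then yields a one-step descent inequality for $f(\tilde{\vx}_t)-f^\ast$ structurally identical to the one driving Theorem~\ref{theorem:nonconvex-simple}.

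Next I close the memory recursion. Since the $i$-th row of the residual matrix $\mM_t$ equals the un-selected-column portion of $\gamma\vg_t^{(i)} + \hat{\vm}_t$, Lemma~\ref{lemma:norm_sampling} applied to the full residual matrix together with a Young's-inequality split and Jensen's inequality on the row average gives $\E\|\hat{\vm}_{t+1}\|^2 \leq c_1 \E\|\hat{\vm}_t\|^2 + c_2 \gamma^2\sigma^2/\xi$ with $c_1 = 1 - \xi/2 < 1$ for the choice $\beta = 2(1-\xi)/\xi$; iterating produces the uniform bound $\E\|\hat{\vm}_t\|^2 = \gO(\gamma^2\sigma^2/\xi^2)$. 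Substituting this back into the descent inequality, together with the PL lower bound $\|\nabla f(\tilde{\vx}_t)\|^2 \geq 2\mu(f(\tilde{\vx}_t)-f^\ast)$ and the stipulated $\gamma = 1/(4L)$ (so that the $L\gamma^2$ second-order terms are absorbed into a $\mu/(8L)$ contraction), collapses everything into the clean one-step recursion
\begin{equation*}
a_{t+1} \;\leq\; \Big(1-\tfrac{\mu}{8L}\Big)\,a_t \;+\; B, \qquad a_t := \E[f(\tilde{\vx}_t)-f^\ast],
\end{equation*}
where $B$ collects exactly the variance, compression, corruption, and $\epsilon$-approximation floors appearing in the statement.

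Finally I convert this function-value recursion into the claimed iterate bound. By convexity of $\|\cdot-\vx^\ast\|^2$, Jensen's inequality gives $\|\hat{\vx}_T-\vx^\ast\|^2 \leq \tfrac{1}{W_T}\sum_t w_t \|\vx_t-\vx^\ast\|^2$, and PL combined with $L$-smoothness implies the quadratic-growth bound $\|\vx_t-\vx^\ast\|^2 \leq \tfrac{2}{\mu}(f(\vx_t)-f^\ast)$~\cite{karimi2016linear}, so it suffices to bound $\tfrac{1}{W_T}\sum_t w_t a_t$. The weight choice $w_t = \rho^{-(t+1)}$ with $\rho := 1-\mu/(8L)$ is tailored exactly to the contraction factor above: multiplying the recursion by $w_t$ telescopes cleanly, and since $W_T$ is a geometric sum of order $\rho^{-T}/(1-\rho)$ one obtains $\tfrac{1}{W_T}\sum_t w_t a_t = \gO(R_0 \rho^T) + \gO(B/(1-\rho))$; substituting $1-\rho = \mu/(8L)$, multiplying by the quadratic-growth factor $2/\mu$, and absorbing the gap between $\mu$ and $L$ into the $\gO(\cdot)$ recovers the three terms in the theorem. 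The step I expect to be the main obstacle is the memory contraction above: because $\gC_k$ is data-adaptive, the Young's-inequality parameter must be calibrated consistently across both the $(a_t)$ and $(\|\hat{\vm}_t\|^2)$ recursions, and mis-calibration leads either to a non-contractive memory bound or to an inflated $1/\xi$ rather than the tight $1/\xi^2$ dependence in the final noise floor.
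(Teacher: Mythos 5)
Your overall architecture (virtual iterate absorbing the memory, uniform memory bound, descent inequality, PL, weighted telescoping, quadratic growth) matches the paper's, but there is a genuine gap at the hand-off between the virtual and the real iterates, and it is exactly the step the paper singles out as the main technical difficulty. Your recursion is on $a_t := \E[f(\tilde{\vx}_t)-f^\ast]$, obtained by applying PL at the \emph{virtual} point $\tilde{\vx}_t$. But the quantity you must ultimately control, after Jensen and quadratic growth, is $\frac{1}{W_T}\sum_t w_t\,\E[f(\vx_t)-f^\ast]$, i.e.\ the suboptimality of the \emph{real} iterates $\vx_t$ (since $\hat{\vx}_T$ averages the $\vx_t$, not the $\tilde{\vx}_t$). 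Your sentence ``so it suffices to bound $\frac{1}{W_T}\sum_t w_t a_t$'' silently identifies these two sequences; they differ by $f(\vx_t)-f(\tilde{\vx}_t)$, which is not controlled anywhere in your argument. The paper resolves this by deriving \emph{two} descent inequalities — one in which the inner product is lower-bounded via $\|\nabla f(\vx_t)\|^2 \geq 2\mu\,\E[f(\vx_t)-f^\ast]$ and one via $\|\nabla f(\tilde{\vx}_t)\|^2 \geq 2\mu\, a_t$ — and averaging them to obtain a recursion of the form
\begin{equation*}
a_{t+1} \leq \Bigl(1-\tfrac{\mu\gamma}{2}\Bigr)a_t - \tfrac{\gamma\mu}{8}\,b_t + B, \qquad b_t := \E[f(\vx_t)-f^\ast],
\end{equation*}
so that the Stich-type weighted-summation lemma directly outputs a bound on $\frac{1}{W_T}\sum_t w_t b_t$ with the $a_t$'s telescoping away. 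Your single-recursion route could in principle be patched (bound $|f(\vx_t)-f(\tilde{\vx}_t)|$ by smoothness plus the uniform gradient-norm bound implicit in Assumption~\ref{assumption:bounded_g} and the memory bound $\E\|\vm_t\|^2 = \gO(\gamma^2\sigma^2\xi^{-2})$; the extra term is dominated by the stated noise floor since $\mu<L$), but as written the step is missing, not merely elided.

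Two smaller points. First, your Jensen step $\|\hat{\vx}_T-\vx^\ast\|^2 \leq \frac{1}{W_T}\sum_t w_t\|\vx_t-\vx^\ast\|^2$ presumes a single fixed minimizer, whereas quadratic growth only controls the distance to the \emph{projection} $\gP_{\gX^\ast}(\vx_t)$, which varies with $t$; the paper instead applies convexity of the squared norm on the other side and takes $\vx^\ast := \sum_t \frac{w_t}{W_T}\gP_{\gX^\ast}(\vx_t)$, using the assumed convexity of $\gX^\ast$ — this is why the theorem says ``for a global optimal solution $\vx^\ast\in\gX^\ast$'' rather than for every one. Second, your memory recursion with contraction factor $1-\xi/2$ is a legitimate alternative derivation of the uniform bound $\E\|\hat{\vm}_t\|^2=\gO(\gamma^2\sigma^2\xi^{-2})$ that the paper imports from the error-feedback literature (its Lemma~\ref{lemma:bound-e}), so that part is fine.
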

Theorem \ref{theorem:nonconvex-simple} and Theorem \ref{theorem:plc-simple} state that Algorithm \ref{algorithm:full} with a constant stepsize  convergences to a neighborhood of a first order stationary point. The radius of this neighborhood depends on two terms. The first term depends on the variance of the stochastic gradients as well as the effectiveness of the coordinate selection strategy through $\xi$. The second term however depends on how accurate the \textsc{Gm}  computation is performed in each iteration. As we demonstrate in our experiments typically the proposed coordinate selection strategy entails a $\xi \approx 1$. Therefore, we expect the radius of error to be negligible. Furthermore, both terms in the radius depend on the fraction of corrupted samples and as long as less than 50\% of the samples are corrupted, \textsc{BGmD} attains convergence to a neighborhood of a stationary  point. We also note that this rate matches the rate of \textsc{Gm-SGD} when the data is not i.i.d. (see e.g. \cite{chen2017distributed,alistarh2018byzantine,wu2020federated,data2020byzantine} and the references therein) while Algorithm \ref{algorithm:full} significantly reduces the cost of \textsc{Gm}-based aggregation. We further note that the convergence properties established in Theorem \ref{theorem:plc-simple} are analogous to those of \textsc{Gm-SGD}. However, compared to the existing analyses that require strong convexity(\cite{alistarh2018byzantine, wu2020federated, data2020byzantine}), Theorem \ref{theorem:plc-simple} only assumes PLC which as we discussed is a much milder condition. Furthermore, in absence of corruption (i.e., $\alpha = 0$), if the  data is i.i.d., our theoretical analysis reveals that by setting $\gamma = \mathcal{O}(1/\sqrt{T})$ Algorithm \ref{algorithm:full} convergences at the rate of $\gO(1/\sqrt{T})$ to the statistical accuracy.\footnote{setting $\gamma = \mathcal{O}(1/T)$, \textsc{BGmD} convergences at the rate of $\gO(\frac{\log T}{T})$ under PLC and no corruption.} This last result can be established by using the concentration of the median-of-the-means estimator \cite{chen2017distributed}.

\subsection{Proof Outline}
\label{section:arx_proof_outline}
Following \cite{stich2018local,karimireddy2019error,basu2019qsparse}, we start by defining a sequences of averaged quantities. Divergent from these works however, due to the adversarial setting being considered, we define these quantities over only the uncorrupted samples:
\begin{equation}
\label{equation:avgs}
    \begin{aligned}
        \vg_t &= \frac{1}{|\sG|} \sum_{i\in\sG} \vg_t^i,\quad  
        \bar{\vg}_t = \E_t[\vg_t] =\frac{1}{|\sG|} \sum_{i\in\sG} \nabla f_i(\vx_t),\\
        \vm_t &= \frac{1}{|\sG|} \sum_{i\in\sG} \mM_t[i, :],\quad
       \Delta_t = \frac{1}{|\sG|} \sum_{i\in\sG} \Delta_t^i,\quad
        \vp_t = \frac{1}{|\sG|} \sum_{i\in\sG} \vp_t^i =  \gamma_t\vg_t+\vm_t.
    \end{aligned}
\end{equation}
Notice that the \textsc{BGmD} cannot compute the above average sequences over the uncorrupted samples, but it aims to approximate the aggregated stochastic gradients of the reliable clients, i.e. $\vg_t$, via the \textsc{Gm}($\cdot$) oracle. Noting the definition of $\Delta_t$ in \eqref{equation:avgs}, the update can be thought of as being a perturbed sequence with the perturbation quantity $\vz_t :=  \Tilde{\vg}_t -\textsc{Gm}(\{\Delta_t^i\}_{i\in \gG}).$
Next, using the closeness of the \textsc{Gm}($\cdot$) oracle to the true average we will establish the following lemmas to bound the perturbation. 
\begin{lemma}[\textbf{Bounding the Memory}]
\label{lemma:bound-e}
Consider the setting of Algorithm \ref{algorithm:full} in iteration $t$ with compression factor $\xi$ (Lemma \ref{lemma:norm_sampling}), learning rate $\gamma$ and in presence of $\alpha-$corruption (Definition \ref{definition:byzantine}). If further $f_i$ have bounded variance $\sigma^2$ (Assumption \ref{assumption:bounded_g}) we have
\begin{equation}
\label{equation:bound-e}
    \E\|\hat{\vm}_t\|^2\leq 4(1-\xi^2)\gamma^2\sigma^2\xi^{-2}, \quad \forall i \in [n].
\end{equation}
\end{lemma}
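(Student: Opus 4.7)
The plan is to establish a one–step linear recursion for $\hat{\vm}_t$ and unroll it. The crucial preliminary observation is that the mask $\sI_k$ produced by Algorithm \ref{algorithm:co_sparse_approx} is \emph{shared across all rows}, so the column selection $\gC_k(\cdot)$ commutes with row averaging. Averaging the memory update in Algorithm \ref{algorithm:full} across rows therefore gives the compact form
\[
\hat{\vm}_{t+1} \;=\; \hat{\vp}_t - \gC_k(\hat{\vp}_t),\qquad \hat{\vp}_t \;:=\; \gamma\,\bar{\vg}_t + \hat{\vm}_t,
\]
where $\bar{\vg}_t$ is the row-averaged stochastic gradient. So $\hat{\vm}_{t+1}$ is simply the residual of the column-selection operator applied to a single vector $\hat{\vp}_t$, which is exactly the object Lemma \ref{lemma:norm_sampling} controls.

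Next I would invoke Lemma \ref{lemma:norm_sampling} (reading it with $b=1$ on the vector $\hat{\vp}_t$) to obtain the contraction $\E_{\gC_k}\|\hat{\vm}_{t+1}\|^2 \le (1-\xi)\|\hat{\vp}_t\|^2$, and then apply Young's inequality with parameter $\eta=\xi$ to decouple memory from gradient:
\[
(1-\xi)\|\hat{\vp}_t\|^2 \;\le\; (1-\xi)(1+\xi)\|\hat{\vm}_t\|^2 + (1-\xi)\bigl(1+\tfrac{1}{\xi}\bigr)\gamma^2\|\bar{\vg}_t\|^2
\;=\; (1-\xi^2)\|\hat{\vm}_t\|^2 + \tfrac{1-\xi^2}{\xi}\gamma^2\|\bar{\vg}_t\|^2.
\]
Taking total expectation and invoking the bounded-variance Assumption \ref{assumption:bounded_g} (together with $(1-\alpha)$ rescaling to convert a per-sample bound into a bound on $\E\|\bar{\vg}_t\|^2$ over the good set $\sG$) yields the linear recursion
\[
\E\|\hat{\vm}_{t+1}\|^2 \;\le\; (1-\xi^2)\,\E\|\hat{\vm}_t\|^2 + \tfrac{1-\xi^2}{\xi}\,\gamma^2\sigma^2.
\]

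Finally, I would unroll from the initialization $\hat{\vm}_0=\vzero$ and sum the resulting geometric series $\sum_{s\ge 0}(1-\xi^2)^s = 1/\xi^2$, which produces
\[
\E\|\hat{\vm}_t\|^2 \;\le\; \tfrac{1-\xi^2}{\xi}\gamma^2\sigma^2\cdot \tfrac{1}{\xi^2} \;\le\; 4(1-\xi^2)\gamma^2\sigma^2\xi^{-2},
\]
where the factor $4$ absorbs the minor losses from bounding $(1+1/\xi)$ versus $(1-\xi^2)/\xi$ and from the $(1-\alpha)^{-1}$ adjustment due to restricting the gradient norm bound to the uncorrupted samples.

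The step I expect to be the main obstacle is the justification that Lemma \ref{lemma:norm_sampling} can be applied cleanly to $\hat{\vp}_t$ even though the column-norm sampling probabilities $s_j = \|\mG_t[:,j]\|^2$ mix good and corrupted rows: the contraction is stated for the full matrix, and in principle an adversary could skew $\sI_k$ away from the coordinates that matter for $\hat{\vp}_t$. Resolving this requires observing that the bound in Lemma \ref{lemma:norm_sampling} is a column-wise statement, so applying it to the particular column combination $\hat{\vp}_t=\gamma\bar{\vg}_t+\hat{\vm}_t$ is legitimate, and that the resulting $(1-\xi)$ factor is independent of the row distribution. Getting the constant to match exactly $4(1-\xi^2)\xi^{-2}$ will be a matter of a careful Young's-inequality parameter choice, but the qualitative $(1-\xi^2)/\xi^2$ dependence falls out immediately from the recursion above.
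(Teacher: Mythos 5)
Your strategy is exactly the one the paper uses: the paper's own ``proof'' of Lemma~\ref{lemma:bound-e} is a one-paragraph citation to Lemma~5 of \cite{basu2019qsparse} (in turn Lemma~3 of \cite{karimireddy2019error}), whose content is precisely your recursion--via--contraction--plus--geometric--sum argument, so in that sense you have supplied the detail the paper omits. One concrete quantitative point does not survive as written, however: with Young's parameter $\eta=\xi$ your per-step additive term is $\tfrac{1-\xi^2}{\xi}\gamma^2\sigma^2$ and your geometric sum is $\sum_s(1-\xi^2)^s=\xi^{-2}$, which yields $\tfrac{1-\xi^2}{\xi^3}\gamma^2\sigma^2$; this is \emph{not} bounded by $4(1-\xi^2)\gamma^2\sigma^2\xi^{-2}$ once $\xi<1/4$, and $\xi$ can be as small as $k/d\ll 1$ here. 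The factor $4$ is not absorbing slack --- it comes from the standard choice $\eta=\tfrac{\xi}{2(1-\xi)}$, which gives contraction rate $1-\xi/2$, additive constant at most $\tfrac{2(1-\xi)}{\xi}\gamma^2\sigma^2$, and geometric sum $2/\xi$, hence $\tfrac{4(1-\xi)}{\xi^2}\gamma^2\sigma^2\le 4(1-\xi^2)\gamma^2\sigma^2\xi^{-2}$. Two smaller remarks: (i) your reduction $\hat{\vm}_{t+1}=\hat{\vp}_t-\gC_k(\hat{\vp}_t)$ is cleanest if you bound $\sum_{j\notin\sI_k}|\hat{\vp}_t[j]|^2\le\tfrac1b\sum_{j\notin\sI_k}s_j$ by Jensen and then invoke Lemma~\ref{lemma:norm_sampling} on the full matrix, rather than asserting the contraction holds for an arbitrary vector under probabilities computed from $\mG_t$; (ii) the $(1-\alpha)$ rescaling cannot rescue an average that includes arbitrarily corrupted rows --- the bound with only $\sigma^2$ on the right-hand side is coherent only if the memory average is taken over $\sG$ (as the paper's own sequence $\vm_t$ in \eqref{equation:avgs} does), an ambiguity that is in the paper's statement rather than in your argument.
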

\begin{lemma}[\bf Bounding the Perturbation]
\label{lemma:bound-z-simplified}
Consider the setting of Algorithm \ref{algorithm:full} in iteration $t$ with compression factor $\xi$ (Lemma \ref{lemma:norm_sampling}), learning rate $\gamma$ and $\epsilon-$approximate \textup{\textsc{Gm}($\cdot$)} oracle in presence of $\alpha-$corruption (Definition \ref{definition:byzantine}). Under the assumption that function $f_i$ are smooth (Assumption \ref{assumption:smooth}),
\begin{equation}\label{eq:bound-z}
    \E\|\vz_t\|^2 \leq \frac{96\gamma^2\sigma^2}{(1-\alpha)^2}\left[1+\frac{4(1-\xi^2)}{\xi^2}\right]+\frac{2\epsilon^2}{|\sG|^2(1-\alpha)^2}.
\end{equation}
\end{lemma}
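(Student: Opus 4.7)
The plan is to bound $\E\|\vz_t\|^2$, where $\vz_t = \Tilde{\vg}_t - \textsc{Gm}(\{\Delta_t^i\}_{i \in \gG})$, by reducing the problem to (i) a classical robustness estimate for the $\epsilon$-approximate geometric median under $\alpha$-corruption, and (ii) a raw second-moment control on the compressed, memory-augmented updates $\Delta_t^i$ for the good samples. Concretely, I would introduce a reference centroid $\bar{\Delta}_t^\gG := |\gG|^{-1}\sum_{i\in\gG}\Delta_t^i$ and split
$$\E\|\vz_t\|^2 \le 2\,\E\|\Tilde{\vg}_t - \bar{\Delta}_t^\gG\|^2 + 2\,\E\|\textsc{Gm}(\{\Delta_t^i\}_{i\in\gG}) - \bar{\Delta}_t^\gG\|^2,$$
so that each term is of the form ``(approximate) \textsc{Gm} vs.\ mean of the set on which it is computed,'' a setting for which off-the-shelf robustness bounds exist.

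For the first (contaminated) term I would invoke the standard robustness result for the $\epsilon$-approximate geometric median in the spirit of Cohen et al.\ / Minsker / Pillutla et al., which under $\alpha$-corruption gives an inequality of the form
$$\E\|\Tilde{\vg}_t - \bar{\Delta}_t^\gG\|^2 \;\le\; \frac{c_1}{(1-\alpha)^2}\cdot\frac{1}{|\gG|}\sum_{i\in\gG}\E\|\Delta_t^i - \bar{\Delta}_t^\gG\|^2 \;+\; \frac{c_2\,\epsilon^2}{|\gG|^2(1-\alpha)^2};$$
the second, corruption-free term can be bounded analogously (with $\alpha=0$) and is therefore dominated by the first up to absolute constants. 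To control the remaining variance of the good $\Delta_t^i$'s, I would note that $\gC_k$ is a coordinate-selection (projection-type) operator, so $\|\Delta_t^i\|^2 \le \|\gamma\vg_t^i + \hat{\vm}_t\|^2$; then variance $\le$ second moment together with $(a+b)^2 \le 2a^2+2b^2$ gives
$$\tfrac{1}{|\gG|}\sum_{i\in\gG}\E\|\Delta_t^i - \bar{\Delta}_t^\gG\|^2 \;\le\; 2\gamma^2\sigma^2 + 2\,\E\|\hat{\vm}_t\|^2 \;\le\; 2\gamma^2\sigma^2\!\left[1+\tfrac{4(1-\xi^2)}{\xi^2}\right],$$
where the first inequality uses Assumption \ref{assumption:bounded_g} on each good gradient (noting that $\hat{\vm}_t$ is measurable w.r.t.\ the past and hence independent of the fresh batch) and the second uses Lemma \ref{lemma:bound-e}. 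Substituting back and absorbing the combinatorial constants yields the stated bound with leading factors $96/(1-\alpha)^2$ and $2/(1-\alpha)^2$.

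The main obstacle is the invocation in the second step: pinning down the precise ``variance-around-a-reference'' form of the $\epsilon$-approximate \textsc{Gm} robustness inequality under $\alpha$-corruption in which both the multiplicative $(1-\alpha)^{-2}$ factor and the additive $\epsilon^2/|\gG|^2$ term appear cleanly. Everything else — the triangle-type split, the variance-$\le$-second-moment reduction, and the plug-in of the memory bound — is routine once that inequality is in place; only careful accounting of $(a+b)^2$ factors is needed to match the explicit constants $96$ and $2$ in the statement.
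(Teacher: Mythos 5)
Your proposal is correct in substance and rests on the same two pillars as the paper's proof: the classical robustness inequality for the $\epsilon$-approximate geometric median under $\alpha$-corruption (Lopuha\"a--Rousseeuw-type, which the paper cites rather than reproves, exactly as you anticipate), followed by a second-moment bound on the good $\Delta_t^i$ via Lemma \ref{lemma:bound-e}. The differences are in the routing. First, the paper does not need your two-term split around $\bar{\Delta}_t^{\gG}$: it works with $\vz_t = \Tilde{\vg}_t - \Delta_t$ (the deviation of the approximate \textsc{Gm} from the \emph{mean} of the good updates, which is what the virtual-sequence argument in Lemma \ref{lemma:bound-virtual-simplified} actually uses, despite the in-text definition you read literally) and exploits translation invariance of \textsc{Gm} to apply the robustness inequality exactly once, obtaining $\E\|\vz_t\|^2 \le \tfrac{8|\sG|}{(|\sG|-|\sB|)^2}\sum_{i\in\sG}\E\|\Delta_t^i - \Delta_t\|^2 + \tfrac{2\epsilon^2}{(|\sG|-|\sB|)^2}$ directly. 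Your extra split costs a factor of $2$ on each term, so the $\epsilon^2$ coefficient comes out as $8$ (or $4$) rather than $2$; the $\gamma^2\sigma^2$ coefficient you get is actually smaller than $96$, so nothing breaks, but the claim that your constants land exactly on $96$ and $2$ is not accurate. Second, your bound on the spread of the good points is cleaner than the paper's: you use the fact that $\gC_k$ is a coordinate projection to get $\E\|\Delta_t^i\|^2 \le 2\gamma^2\sigma^2 + 2\E\|\hat{\vm}_t\|^2$, whereas the paper goes through $\Delta_t^i = \vp_t^i - \hat{\vm}_{t+1}$ and a three-term Jensen step, yielding the looser constant $12\gamma^2\sigma^2\bigl[1+4(1-\xi^2)/\xi^2\bigr]$ in place of your $2\gamma^2\sigma^2\bigl[1+4(1-\xi^2)/\xi^2\bigr]$. (The independence remark about $\hat{\vm}_t$ and the fresh batch is unnecessary: the inequality $\|a+b\|^2 \le 2\|a\|^2+2\|b\|^2$ needs no independence.) Net: same skeleton, a redundant decomposition that slightly inflates the $\epsilon^2$ term, and a tighter variance step.
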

With the bound in Lemma \ref{lemma:bound-z-simplified}, we define the following perturbed virtual sequences for $i\in \sG$ 
\begin{equation}
        \tilde{\vx}_{t+1}^i = \tilde{\vx}_{t}- \gamma \vg_t^i -\vz_t, \quad \tilde{\vx}_{0}^i = \vx_0,\quad \tilde{\vx}_{t+1}= \frac{1}{|\sG|}\sum_{i\in \sG}\tilde{\vx}_{t+1}^i = \tilde{\vx}_{t}- \gamma \vg_t -\vz_t
\end{equation}
Again, $\tilde{\vx}_{t}$ can be though of as a perturbed version of the SGD iterates over only the good samples $i\in \sG$.
Notice that \textsc{BGmD} does not compute the virtual sequence and this sequence is defined merely for the theoretical analysis. Therefore, it is essential to establish its relation to ${\vx}_{t}$, i.e. the iterates of \textsc{BGmD}. We do so in Lemma \ref{lemma:bound-virtual-simplified}. 
\begin{lemma}[\bf Memory as a Delayed Sequence]
\label{lemma:bound-virtual-simplified}
Consider the setting of Algorithm \ref{algorithm:full} in iteration $t$. It holds that $\vx_t - \tilde{\vx}_t = \vm_t.$
\end{lemma}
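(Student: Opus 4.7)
My plan is to prove the identity by induction on $t$, exploiting the fact that the memory mechanism is specifically designed so that all the residual gap between the algorithm iterate $\vx_t$ and the virtual ``clean'' iterate $\tilde{\vx}_t$ is exactly captured by $\vm_t$. The base case is immediate: $\vx_0 = \tilde{\vx}_0$ by initialization and $\hat{\vm}_0 = \mathbf{0}$, so with the convention that $\vm_t$ represents the averaged residual carried into iteration $t$, both sides vanish at $t=0$.

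For the inductive step, I would subtract the virtual update $\tilde{\vx}_{t+1} = \tilde{\vx}_t - \gamma\vg_t - \vz_t$ from the actual update $\vx_{t+1} = \vx_t - \tilde{\vg}_t$ to obtain
\begin{equation*}
\vx_{t+1} - \tilde{\vx}_{t+1} \;=\; (\vx_t - \tilde{\vx}_t) \;+\; \gamma\vg_t + \vz_t - \tilde{\vg}_t.
\end{equation*}
Applying the inductive hypothesis to the first summand, the task reduces to verifying the memory recursion $\vm_{t+1} - \vm_t = \gamma\vg_t + \vz_t - \tilde{\vg}_t$ (up to the obvious index shift). To do this, I would unpack the algorithm's per-sample residual $\mM_t[i,:] = \gamma\vg_t^{(i)} + \hat{\vm}_t - \Delta_t^i$ coming from the memory-augmentation and block-selection steps, then average this identity over $i\in\sG$ using the definitions in \eqref{equation:avgs} to obtain a closed form for $\vm_t$ in terms of $\vg_t$, the incoming memory, and the good-sample average $\Delta_t$. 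The definition $\vz_t := \tilde{\vg}_t - \textsc{Gm}(\{\Delta_t^i\}_{i\in\gG})$ is then exactly the correction needed to identify $\tilde{\vg}_t - \vz_t$ with the aggregate the virtual sequence ``sees,'' closing the induction.

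The main delicate point will be the bookkeeping around the mismatch between the algorithm's actual memory state $\hat{\vm}_t$, which is averaged over all $b$ samples (including corrupt ones), and the virtual memory $\vm_t$, which is restricted to the good set $\sG$. Since the algorithm has no knowledge of $\sG$, the induction must track the iterates in a way that hides this difference inside $\vz_t$; concretely, I expect to either telescope $\sum_{s\le t}(\gamma\vg_s + \vz_s - \tilde{\vg}_s)$ directly from the unrolled recurrences for $\vx_t$ and $\tilde{\vx}_t$ and match it term-by-term against $\vm_t$, or to carry a strengthened inductive invariant that jointly constrains $\vx_t - \tilde{\vx}_t$ and the good-sample memory. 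Beyond this indexing care, the remaining steps are straightforward algebraic manipulation using only the definitions given in \eqref{equation:avgs} and the update rules of Algorithm \ref{algorithm:full}.
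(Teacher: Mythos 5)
Your proposal is correct and follows essentially the same route as the paper's own proof: subtract the virtual update $\tilde{\vx}_{t+1}=\tilde{\vx}_t-\gamma\vg_t-\vz_t$ from the actual update $\vx_{t+1}=\vx_t-\tilde{\vg}_t$, identify the increment $\gamma\vg_t+\vz_t-\tilde{\vg}_t=\gamma\vg_t-\Delta_t$ with $\vm_{t+1}-\vm_t$ via the memory recursion, and telescope from $\vm_0=\mathbf{0}$. You also correctly flag the one genuinely delicate point --- the mismatch between the algorithm's memory $\hat{\vm}_t$ (averaged over all $b$ samples) and the virtual memory $\vm_t$ (averaged over $\sG$ only) --- which the paper's proof silently elides.
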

The main challenge in showing this result is the presence of perturbations $\vz_t$ in the resilient aggregation that we adopt in Algorithm \ref{algorithm:full}.
Upon establishing this lemma, using smoothness we establish a bound on suboptimality of the  model learned at each iteration of \textsc{BGmD} as a function of the perturbed virtual sequence $\tilde{\vx}_t$.
\begin{lemma}
[\bf Recursive Bounding of the Suboptimality]
\label{lemma:recursive_bound}
For any $0<\rho<0.5$ it holds that
\begin{equation}
\begin{aligned}
\E_t[f(\tilde{\vx}_{t+1})] &\leq f(\tilde{\vx}_{t})-\left(\frac{1}{2}-\rho\right)\frac{\gamma}{2} \|  \nabla f(\vx_t)\|^2+\frac{3\gamma L^2}{2}\|\tilde{\vx}_{t}-\vx_t\|^2\\
&\qquad\qquad+L\gamma^2\E_t\|\vg_t\|^2+ \left(L+\frac{1}{2\rho\gamma}+\frac{1}{2\gamma}\right)\E_t\|\vz_t\|^2.
    \end{aligned}
\end{equation}
\end{lemma}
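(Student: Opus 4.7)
The plan is to apply the $L$-smoothness descent lemma to the perturbed virtual sequence $\tilde{\vx}_t$ and then decompose the resulting inner products so that only $\nabla f(\vx_t)$ (not $\nabla f(\tilde{\vx}_t)$) appears in the final bound, translating between the two via $L$-smoothness and $\|\tilde{\vx}_t - \vx_t\|^2$. Because $f = \tfrac{1}{|\sG|}\sum_{i\in\sG} f_i$ is an average of $L$-smooth functions (Assumption \ref{assumption:smooth}) and hence itself $L$-smooth, substituting $\tilde{\vx}_{t+1} - \tilde{\vx}_t = -\gamma \vg_t - \vz_t$ into the descent inequality and taking $\E_t[\,\cdot\,]$, with $\E_t[\vg_t] = \nabla f(\vx_t)$ by Assumption \ref{assumption:bounded_g}, yields
\begin{equation*}
\E_t[f(\tilde{\vx}_{t+1})] \leq f(\tilde{\vx}_t) - \gamma \langle \nabla f(\tilde{\vx}_t), \nabla f(\vx_t)\rangle - \langle \nabla f(\tilde{\vx}_t), \E_t[\vz_t]\rangle + \tfrac{L}{2}\E_t\|\gamma \vg_t + \vz_t\|^2.
\end{equation*}

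I then treat the three right-hand terms separately. The quadratic is dispatched by $\|a+b\|^2 \leq 2\|a\|^2 + 2\|b\|^2$, producing exactly the $L\gamma^2 \E_t\|\vg_t\|^2$ target term together with an $L\,\E_t\|\vz_t\|^2$ contribution. For $-\gamma \langle \nabla f(\tilde{\vx}_t), \nabla f(\vx_t)\rangle$ I decompose $\nabla f(\tilde{\vx}_t) = \nabla f(\vx_t) + [\nabla f(\tilde{\vx}_t) - \nabla f(\vx_t)]$, peel off $-\gamma \|\nabla f(\vx_t)\|^2$, and apply Young's inequality together with $L$-smoothness to the cross term with a weight chosen so that it contributes $\gamma L^2 \|\tilde{\vx}_t - \vx_t\|^2 + \tfrac{\gamma}{4}\|\nabla f(\vx_t)\|^2$, leaving a net $-\tfrac{3\gamma}{4}\|\nabla f(\vx_t)\|^2$ at this stage. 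The $\vz_t$ inner product is handled by the same decomposition plus two Young's applications with \emph{different} weights, and this is where the free parameter $\rho$ enters: weight $\rho\gamma$ on $-\langle \nabla f(\vx_t), \E_t[\vz_t]\rangle$ produces $\tfrac{\rho\gamma}{2}\|\nabla f(\vx_t)\|^2 + \tfrac{1}{2\rho\gamma}\|\E_t[\vz_t]\|^2$, while weight $\gamma$ on $-\langle \nabla f(\tilde{\vx}_t) - \nabla f(\vx_t), \E_t[\vz_t]\rangle$ gives, after invoking $L$-smoothness, $\tfrac{\gamma L^2}{2}\|\tilde{\vx}_t - \vx_t\|^2 + \tfrac{1}{2\gamma}\|\E_t[\vz_t]\|^2$; Jensen's inequality $\|\E_t[\vz_t]\|^2 \leq \E_t\|\vz_t\|^2$ then lifts each to a second moment. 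Collecting, the coefficient of $\|\nabla f(\vx_t)\|^2$ is $-\tfrac{3\gamma}{4}+\tfrac{\rho\gamma}{2} \leq -(\tfrac{1}{2}-\rho)\tfrac{\gamma}{2}$, that of $\|\tilde{\vx}_t - \vx_t\|^2$ is $\gamma L^2 + \tfrac{\gamma L^2}{2} = \tfrac{3\gamma L^2}{2}$, and that of $\E_t\|\vz_t\|^2$ is $L + \tfrac{1}{2\rho\gamma} + \tfrac{1}{2\gamma}$, exactly as stated.

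The main obstacle is a bookkeeping one rather than a conceptual one: the four Young's weights must be tuned simultaneously so that the $L^2$ factors from the two cross terms combine into exactly $\tfrac{3\gamma L^2}{2}$ and so that $\rho$ survives as an explicit free knob on $\|\nabla f(\vx_t)\|^2$—this flexibility is essential downstream for trading the $\tfrac{3\gamma L^2}{2}\|\tilde{\vx}_t - \vx_t\|^2$ term against the memory bound of Lemma \ref{lemma:bound-e} and, in the PLC regime, against the PLC inequality inside Theorem \ref{theorem:plc-simple}. A subtler point is that $\vz_t$ need not be zero-mean—the approximate \textsc{Gm} oracle is biased toward the robust median rather than the mean—so Jensen's inequality must be used in every Young's step instead of relying on a vanishing first moment. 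No moment bound on $\vz_t$ itself is required here; Lemma \ref{lemma:bound-z-simplified} is invoked only later, when the recursion of this lemma is unrolled inside the convergence theorems.
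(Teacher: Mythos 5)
Your proof is correct and follows essentially the same route as the paper's: the $L$-smoothness descent lemma applied to the virtual sequence, the bound $\|a+b\|^2\le 2\|a\|^2+2\|b\|^2$ on the quadratic term, and an identical two-application Young's treatment of the $\vz_t$ inner product with weights $\rho\gamma$ and $\gamma$ followed by Jensen, yielding exactly the stated coefficients. The only minor deviation is in bounding $-\gamma\langle\nabla f(\tilde{\vx}_{t}),\nabla f(\vx_t)\rangle$: the paper uses the polarization identity to arrive at $-\tfrac{\gamma}{4}\|\nabla f(\vx_t)\|^2+\gamma L^2\|\tilde{\vx}_{t}-\vx_t\|^2$, whereas your add-and-subtract plus Young's argument gives the slightly sharper $-\tfrac{3\gamma}{4}\|\nabla f(\vx_t)\|^2$ with the same $\gamma L^2\|\tilde{\vx}_{t}-\vx_t\|^2$ term, which you then validly relax to match the stated constant.
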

The proofs are furnished by noting that the amount of perturbation can be bounded by using smoothness and the PLC assumptions.


\section{Empirical Evidence}
\label{section:arx_experiment}
In this section, we describe our experimental setup and present our empirical findings and establish strong insights about the performance of \textsc{BgmD}. Table~\ref{table:fmnist},~\ref{table:mnist} and~\ref{table:cifar} provide a summary of the results.

\subsection{Experimental Setup}
We use the following two important optimization setups for our experiments: 

\paragraph{\textbf{Homogeneous Samples}} We trained a moderately large LeNet \cite{lecun1998gradient} style CNN with 1.16M parameters on the challenging Fashion-MNIST \cite{xiao2017fashion} dataset in the \textit{vanilla mini batch} setting with 32 parallel mini-batches each with batch size 64. The training data was \textbf{i.i.d} among all the batches at each epoch. We use initial learning rate of 0.01 which was decayed by $1\%$ at each epoch. Each experiment under this setting was run for \textbf{50 epochs} (full passes over training data). 
We also train the same model on simple MNIST~\cite{lecun1998gradient} dataset using mini-batch SGD with 10 parallel batches each of size 64. We use a learning rate 0.001 updated using cosine annealing scheduler along with a momentum 0.9. 

\paragraph{\textbf{Heterogeneous Samples}} Our theoretical results (Theorem ~\ref{theorem:nonconvex-simple},~\ref{theorem:plc-simple}) are established without any assumption on how the data is distributed across batches. We verify this by training an 18-layer wide ResNet \cite{he2016deep} with 11.2M parameters on CIFAR 10 \cite{krizhevsky2012imagenet} in a \textbf{federated learning} setting \cite{mcmahan2017communication}. In this setting, at each iteration 10 parallel clients compute gradients over batch sizes of 128 and communicate with the central server. The training data was distributed among the clients in a \textbf{non i.i.d} manner only once at the beginning of training. Each experiment was run for \textbf{200 epochs} with an initial learning rate 0.1 , warm restarted via cosine annealing. \\
To \textbf{Simulate heterogeneous (non i.i.d) data} distribution among clients we use the following scheme:  The entire training data was first sorted based on labels and then contiguously divided into 80 equal data-shards $\gD_i$ i.e. for Fashion-MNIST each data shard was assigned 750 samples. Each client was then randomly assigned 8 shards of data at the beginning of training implying with high probability no client had access to all classes of data ensuring heterogeneity~\cite{li2018federated, li2019convergence, das2020faster, karimireddy2020scaffold}.
\\ \\
Common to all the experiments, we used the categorical cross entropy loss plus an $\ell_2$ regularizer with weight decay value of $1\mathrm{e}{-4}$. All the experiments are performed on a single 12 \textsc{GB Titan Xp} GPU. For reproducibility, all the experiments are run with \textbf{deterministic \textsc{CuDNN} back-end}. Further, each experiment is repeated 5 times with different random seeds and confidence interval is noted in the Tables~\ref{table:fmnist}~\ref{table:cifar} and ~\ref{table:mnist}.  

\begin{table}[htb!]
\centering
\caption{\textbf{Homogeneous Training:} 1.12M parameter CNN trained on Fashion MNIST in regular i.i.d. setting. For all corruption types, test accuracy of \textsc{BGmD} is similar to that of \textsc{GmD} and surprisingly, in some cases even higher. As expected, \textsc{SGD} often diverges under corruption (empty entries in the table). \textsc{CmD} performs sub-optimally as corruption is increased.}
\begin{tabular}{llllll}
\toprule
& Corruption (\%) 
& \multicolumn{1}{c}{SGD} 
& \multicolumn{1}{c}{\textsc{CmD}} 
& \multicolumn{1}{c}{\textsc{BGmD}} 
& \multicolumn{1}{c}{\textsc{GmD}} \\ \midrule
Clean             
& \multicolumn{1}{c}{-}   
& \multicolumn{1}{c}{$\mathbf{89.39}\textcolor{gray}{\pm 0.28}$}    
& \multicolumn{1}{c}{$83.82\textcolor{gray}{\pm 0.26}$}    
& \multicolumn{1}{c}{$89.25\textcolor{gray}{\pm 0.19}$}     
& \multicolumn{1}{c}{$88.98\textcolor{gray}{\pm 0.3}$}    \\
\multicolumn{6}{c}{}\\
\multicolumn{6}{c}{Gradient Attack}\\
\multicolumn{6}{c}{}\\
\multirow{2}{*}{Bit Flip} 
& \multicolumn{1}{c}{20}
& \multicolumn{1}{c}{-}   
& \multicolumn{1}{c}{$84.20\textcolor{gray}{\pm 0.02}$}    
& \multicolumn{1}{c}{$\mathbf{88.42}\textcolor{gray}{\pm 0.16}$}     
& \multicolumn{1}{c}{$88.07\textcolor{gray}{\pm 0.05}$}    \\
& \multicolumn{1}{c}{40}             
& \multicolumn{1}{c}{-}     
& \multicolumn{1}{c}{$82.33\textcolor{gray}{\pm 1.60}$}     
& \multicolumn{1}{c}{$\mathbf{85.67}\textcolor{gray}{\pm 0.09}$}      
& \multicolumn{1}{c}{$85.57\textcolor{gray}{\pm 0.09}$} \\
\multicolumn{6}{c}{}\\
\multirow{2}{*}{Additive} 
& \multicolumn{1}{c}{20}
& \multicolumn{1}{c}{-}     
& \multicolumn{1}{c}{$72.55\textcolor{gray}{\pm 0.16}$}     
& \multicolumn{1}{c}{$\mathbf{87.87}\textcolor{gray}{\pm 0.33}$}      
& \multicolumn{1}{c}{$87.24\textcolor{gray}{\pm 0.16}$}    \\
& \multicolumn{1}{c}{40}             
& \multicolumn{1}{c}{-}     
& \multicolumn{1}{c}{$41.04\textcolor{gray}{\pm 1.13}$}    
& \multicolumn{1}{c}{$\mathbf{88.29}\textcolor{gray}{\pm 0.01}$}      
& \multicolumn{1}{c}{$83.89\textcolor{gray}{\pm 0.08}$} \\
\multicolumn{6}{c}{}\\
\multicolumn{6}{c}{Feature Attack}\\
\multicolumn{6}{c}{}\\
\multirow{2}{*}{Additive Noise} 
& \multicolumn{1}{c}{20}
& \multicolumn{1}{c}{-}   
& \multicolumn{1}{c}{$82.38\textcolor{gray}{\pm 0.13}$}    
& \multicolumn{1}{c}{$\mathbf{86.76}\textcolor{gray}{\pm 0.03}$}     
& \multicolumn{1}{c}{$86.63\textcolor{gray}{\pm 0.04}$}    \\
& \multicolumn{1}{c}{40}             
& \multicolumn{1}{c}{-}     
& \multicolumn{1}{c}{$78.54\textcolor{gray}{\pm 0.65}$}     
& \multicolumn{1}{c}{$\mathbf{82.27}\textcolor{gray}{\pm 0.06}$}      
& \multicolumn{1}{c}{$81.23\textcolor{gray}{\pm 0.03}$} \\
\multicolumn{6}{c}{}\\
\multirow{2}{*}{Impulse Noise} 
& \multicolumn{1}{c}{20}
& \multicolumn{1}{c}{$79.18\textcolor{gray}{\pm 6.47}$}     
& \multicolumn{1}{c}{$82.59\textcolor{gray}{\pm 0.60}$}     
& \multicolumn{1}{c}{$\textbf{86.91}\textcolor{gray}{\pm 0.36}$}      
& \multicolumn{1}{c}{$86.23\textcolor{gray}{\pm 0.03}$}    \\
& \multicolumn{1}{c}{40}             
& \multicolumn{1}{c}{-}     
& \multicolumn{1}{c}{$78.03\textcolor{gray}{\pm 0.73}$}    
& \multicolumn{1}{c}{$\mathbf{78.03}\textcolor{gray}{\pm 0.73}$}      
& \multicolumn{1}{c}{$81.41\textcolor{gray}{\pm 0.12}$} \\
\multicolumn{6}{c}{}\\
\multicolumn{6}{c}{Label Attack}\\
\multicolumn{6}{c}{}\\
\multirow{2}{*}{Backdoor} 
& \multicolumn{1}{c}{20}
& \multicolumn{1}{c}{$86.99\textcolor{gray}{\pm 0.02}$}   
& \multicolumn{1}{c}{$76.38\textcolor{gray}{\pm 0.13}$}    
& \multicolumn{1}{c}{$\mathbf{88.97}\textcolor{gray}{\pm 0.10}$}     
& \multicolumn{1}{c}{$88.26\textcolor{gray}{\pm 0.04}$}    \\
& \multicolumn{1}{c}{40}             
& \multicolumn{1}{c}{$73.01\textcolor{gray}{\pm 0.68}$}     
& \multicolumn{1}{c}{$60.85\textcolor{gray}{\pm 1.24}$}     
& \multicolumn{1}{c}{$\mathbf{84.69}\textcolor{gray}{\pm 0.31}$}      
& \multicolumn{1}{c}{$81.32\textcolor{gray}{\pm 0.16}$} \\\bottomrule 
\end{tabular}
\label{table:fmnist}
\end{table}

\begin{table}[htb!]
\centering
\caption{\textbf{Homogeneous Training:}1.12M parameter CNN trained on MNIST in regular i.i.d. setting. For all corruption types, test accuracy of \textsc{BGmD} is similar to that of \textsc{GmD} and surprisingly, in some cases even higher. As expected, \textsc{SGD} fails to make progress under corruption . \textsc{CmD} performs sub-optimally as corruption is increased.}
\begin{tabular}{llllll}
\toprule
& Corruption (\%) 
& \multicolumn{1}{c}{SGD} 
& \multicolumn{1}{c}{\textsc{CmD}} 
& \multicolumn{1}{c}{\textsc{BGmD}} 
& \multicolumn{1}{c}{\textsc{GmD}} \\ \midrule
Clean             
& \multicolumn{1}{c}{-}   
& \multicolumn{1}{c}{$\mathbf{99.27}\textcolor{gray}{\pm 0.01}$}    
& \multicolumn{1}{c}{$98.83\textcolor{gray}{\pm 0.02}$}    
& \multicolumn{1}{c}{$99.09\textcolor{gray}{\pm 0.05}$}     
& \multicolumn{1}{c}{$99.24\textcolor{gray}{\pm 0.02}$}    \\
\multicolumn{6}{c}{}\\
\multicolumn{6}{c}{Gradient Attack}\\
\multicolumn{6}{c}{}\\
\multirow{2}{*}{Bit Flip} 
& \multicolumn{1}{c}{20}
& \multicolumn{1}{c}{$9.51\textcolor{gray}{\pm 1.77}$}    
& \multicolumn{1}{c}{$98.79\textcolor{gray}{\pm 0.01}$}    
& \multicolumn{1}{c}{$\mathbf{99.06}\textcolor{gray}{\pm 0.02}$}     
& \multicolumn{1}{c}{$98.98\textcolor{gray}{\pm 0.01}$}    \\
& \multicolumn{1}{c}{40}             
& \multicolumn{1}{c}{$9.60\textcolor{gray}{\pm 2.04}$}     
& \multicolumn{1}{c}{$93.69\textcolor{gray}{\pm 0.09}$}     
& \multicolumn{1}{c}{$97.89\textcolor{gray}{\pm 0.05}$}      
& \multicolumn{1}{c}{$\mathbf{98.11}\textcolor{gray}{\pm 0.12}$} \\
\multicolumn{6}{c}{}\\
\multirow{2}{*}{Additive} 
& \multicolumn{1}{c}{20}
& \multicolumn{1}{c}{$9.68\textcolor{gray}{\pm 0.11}$}     
& \multicolumn{1}{c}{$94.26\textcolor{gray}{\pm 0.03}$}     
& \multicolumn{1}{c}{$98.61\textcolor{gray}{\pm 0.01}$}      
& \multicolumn{1}{c}{$\mathbf{98.69}\textcolor{gray}{\pm 0.01}$}    \\
& \multicolumn{1}{c}{40}             
& \multicolumn{1}{c}{$9.74\textcolor{gray}{\pm 0.12}$}     
& \multicolumn{1}{c}{$91.86\textcolor{gray}{\pm 0.03}$}    
& \multicolumn{1}{c}{$\mathbf{97.78}\textcolor{gray}{\pm 0.27}$}      
& \multicolumn{1}{c}{$92.78\textcolor{gray}{\pm 0.04}$} \\\bottomrule 
\end{tabular}
\label{table:mnist}
\end{table}

\begin{table}[htb!]
\centering
\caption{\textbf{Heterogeneous Training:} ResNet-18 trained on CIFAR-10 in a federated setting under two gradient attack models. We observe similar trends as Tables ~\ref{table:fmnist} and ~\ref{table:mnist} - \textsc{GmD} and \textsc{BGmD} both remain robust at both 20\% and 40\% corruption maintaining high accuracy. At high corruptions \textsc{BGmD} achieves similar or even better performance than \textsc{GmD} while taking ~3x less time.  
Interestingly, even in non-corrupt setting, \textsc{BGmD} outperforms \textsc{SGD}, perhaps due to the distributional shift introduced by non-i.i.d setting. Note that in i.i.d setting, our model achieves 94.8\% accuracy with \textsc{SGD}}
\begin{tabular}{llllll}
\toprule
& Corruption (\%) 
& \multicolumn{1}{c}{SGD} 
& \multicolumn{1}{c}{\textsc{CmD}} 
& \multicolumn{1}{c}{\textsc{BGmD}} 
& \multicolumn{1}{c}{\textsc{GmD}} \\ \midrule
Clean             
& \multicolumn{1}{c}{-}   
& \multicolumn{1}{c}{$82.29\textcolor{gray}{\pm 1.32}$}    
& \multicolumn{1}{c}{$85.50\textcolor{gray}{\pm 1.43}$}    
& \multicolumn{1}{c}{$84.82\textcolor{gray}{\pm 0.76}$}     
& \multicolumn{1}{c}{$\mathbf{85.65}\textcolor{gray}{\pm 0.48}$}    \\
\multicolumn{6}{c}{}\\
\multicolumn{6}{c}{Gradient Attack}\\
\multicolumn{6}{c}{}\\
\multirow{2}{*}{Bit Flip} 
& \multicolumn{1}{c}{20}
& \multicolumn{1}{c}{-}    
& \multicolumn{1}{c}{$80.87\textcolor{gray}{\pm 0.21}$}    
& \multicolumn{1}{c}{${84.56}\textcolor{gray}{\pm 0.06}$}     
& \multicolumn{1}{c}{$\mathbf{88.07}\textcolor{gray}{\pm 0.05}$}    \\
& \multicolumn{1}{c}{40}             
& \multicolumn{1}{c}{-}     
& \multicolumn{1}{c}{$77.41\textcolor{gray}{\pm 1.04}$}     
& \multicolumn{1}{c}{$\mathbf{82.66}\textcolor{gray}{\pm 0.31}$}      
& \multicolumn{1}{c}{$80.81\textcolor{gray}{\pm 0.01}$} \\
\multicolumn{6}{c}{}\\
\multirow{2}{*}{Additive} 
& \multicolumn{1}{c}{20}
& \multicolumn{1}{c}{$20.7\textcolor{gray}{\pm 1.56}$}     
& \multicolumn{1}{c}{$54.75\textcolor{gray}{\pm 0.38}$}     
& \multicolumn{1}{c}{$\mathbf{83.84}\textcolor{gray}{\pm 0.12}$}      
& \multicolumn{1}{c}{$82.40\textcolor{gray}{\pm 0.90}$}    \\
& \multicolumn{1}{c}{40}             
& \multicolumn{1}{c}{-}     
& \multicolumn{1}{c}{$23.35\textcolor{gray}{\pm 6.13}$}    
& \multicolumn{1}{c}{$\mathbf{82.79}\textcolor{gray}{\pm 0.68}$}      
& \multicolumn{1}{c}{$79.46\textcolor{gray}{\pm 0.24}$} \\\bottomrule 
\end{tabular}
\label{table:cifar}
\end{table}

\begin{figure}[t]
\subfloat[\footnotesize{No corruption}]{\includegraphics[width=0.25\columnwidth]{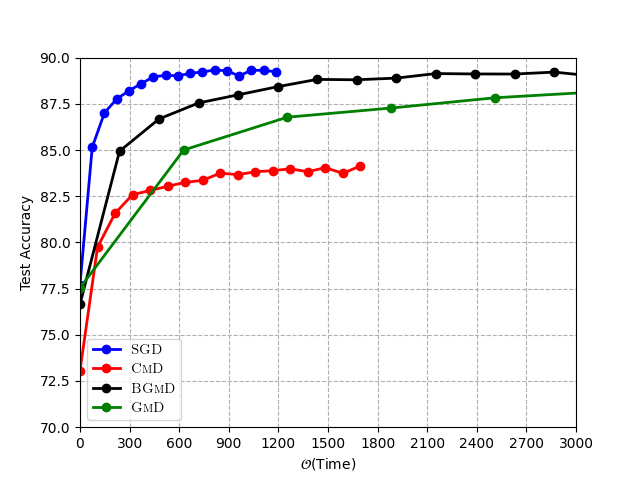}}
\subfloat[\footnotesize{10\% Corruption}]{\includegraphics[width=0.25\columnwidth]{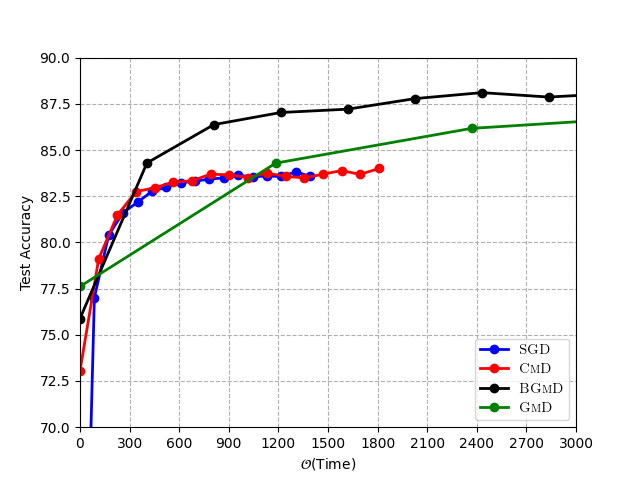}}
\subfloat[\footnotesize{20\% Corruption}]{\includegraphics[width=0.25\columnwidth]{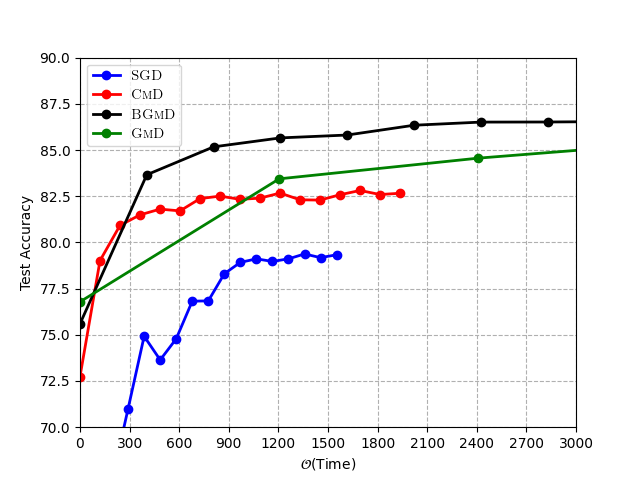}}
\subfloat[\footnotesize{40\% Corruption}]{\includegraphics[width=0.25\columnwidth]{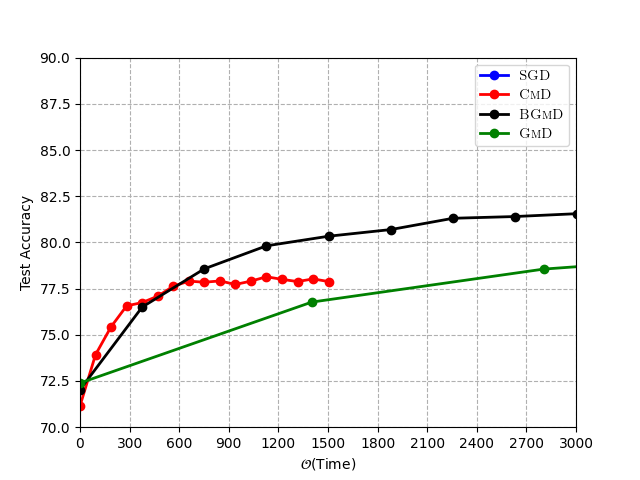}} 
\caption{\footnotesize{\textbf{Robustness to Feature Corruption}: We compare  test set performance of different schemes as a function of \textbf{wall clock time} for training Fashion MNIST in i.i.d setting. These experiments are performed in presence of increasing the fraction samples corrupted with \textbf{impulse noise}. At 40\% corruption \textsc{SGD} diverged and thus does not appear in the plot. Observe that \textsc{BGmD} is able to maintain high accuracy in even presence of strong corruption and is at least \textsc{GmD} while attaining at least 3x speedup over \textsc{GmD}.}}
\label{figure:fmnist.image_corr}
\end{figure}
\begin{figure}[t]
\subfloat[\footnotesize{No corruption}]{\includegraphics[width=0.25\columnwidth]{figs/neurips/fmnist/clean.png}}
\subfloat[\footnotesize{10\% Corruption}]{\includegraphics[width=0.25\columnwidth]{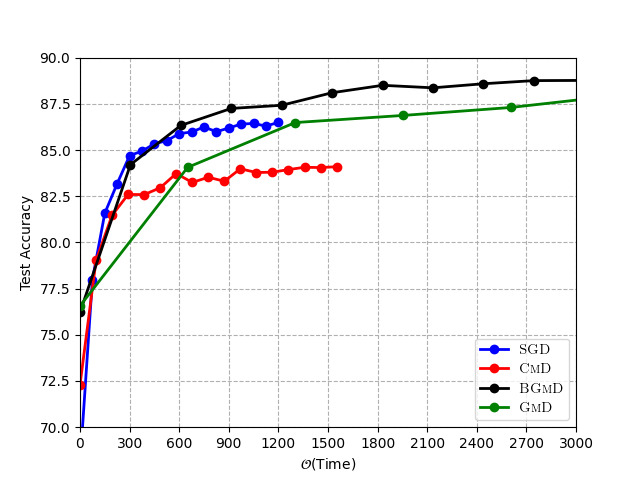}}
\subfloat[\footnotesize{20\% Corruption}]{\includegraphics[width=0.25\columnwidth]{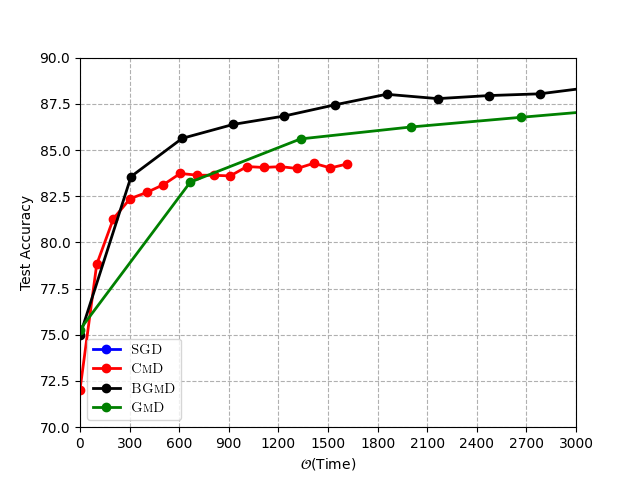}}
\subfloat[\footnotesize{40\% Corruption}]{\includegraphics[width=0.25\columnwidth]{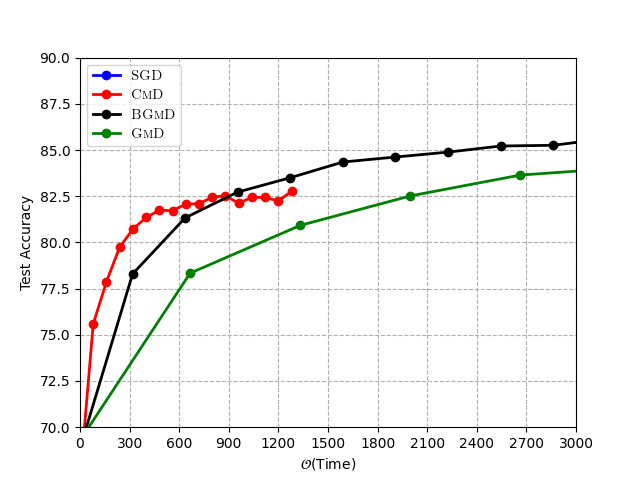}}  
\caption{\footnotesize{\textbf{Robustness to Gradient Corruption}: We train a CNN on Fashion MNIST in the i.i.d setting in presence of scaled bit flip corruption of stochastic gradients. Similar to Figure~\ref{figure:fmnist.image_corr}, \textsc{BGmD} achieves up to $\sim 3\times x$ speedup over \textsc{GmD} while being robust. For high corruption $\%$, \textsc{CmD} converges to sub-optimal solution.}}
\label{figure:fmnist.grad_corr}
\end{figure}
\begin{figure}[htb]
\subfloat[\footnotesize{No corruption}]{\includegraphics[width=0.25\columnwidth]{figs/neurips/fmnist/clean.png}}
\subfloat[\footnotesize{10\% Corruption}]{\includegraphics[width=0.25\columnwidth]{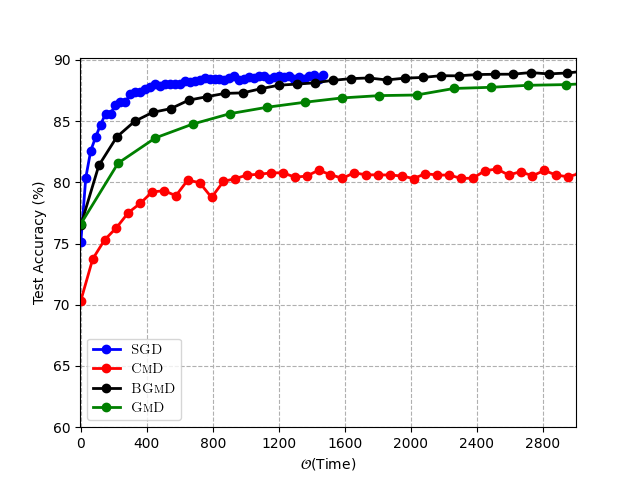}}
\subfloat[\footnotesize{20\% Corruption}]{\includegraphics[width=0.25\columnwidth]{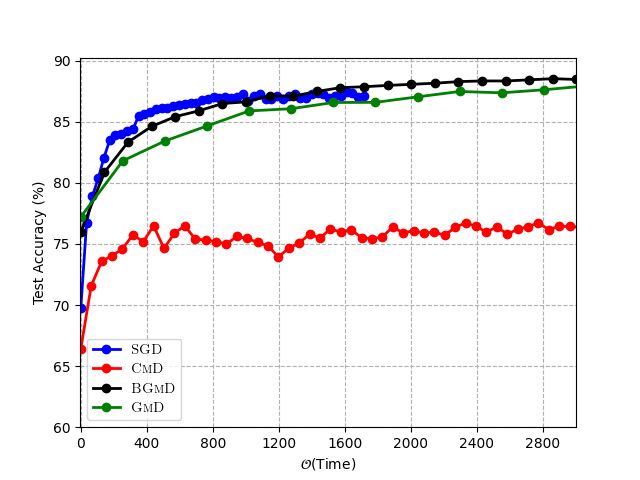}}
\subfloat[\footnotesize{40\% Corruption}]{\includegraphics[width=0.25\columnwidth]{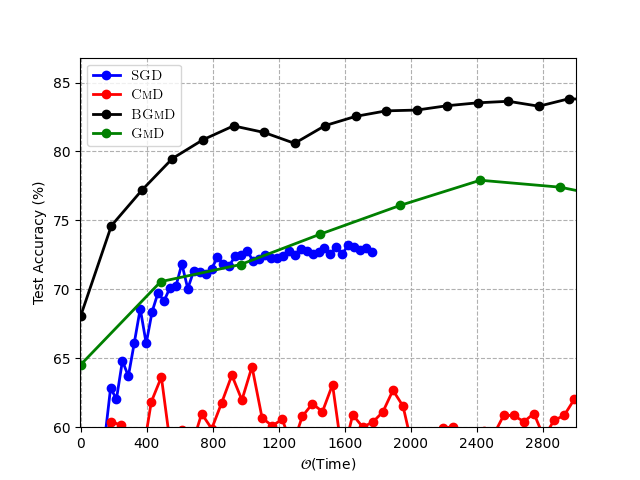}}  
\caption{\footnotesize{\textbf{Robustness to Label Corruption}: We train a CNN on Fashion-MNIST in the i.i.d setting in presence of \textbf{backdoor attack}. \textsc{BGmD} achieves up to large speedup over \textsc{GmD} while being robust even at very high corruptions where \textsc{SGD} converges very slowly due to backdoor attack and converges to a sub-optimal point at high 40\% corruption.\textsc{CmD} converges to sub-optimal solution.}}
\label{figure:fmnist.label_corr}
\end{figure}

\subsection{Corruption Simulation}
We consider all three possible sources of error: corruption in \textbf{features}, corruption in \textbf{labels}, and corruption in \textbf{communicated gradients}. All the experiments are repeated for 0\% (i.e. clean), 20\% and 40\% corruption levels, i.e, $\psi = 0, 0.2, 0.4$, respectively (see Definition~\ref{definition:byzantine}).  

\paragraph{\textbf{Feature Corruption:}} We consider corruption in the raw training data itself which can arise from different issues related to data collection. Particularly, adopting the corruptions introduced in \cite{hendrycks2018benchmarking}, we use two noise models to directly apply to the corrupt samples: \\
\textbf{(i) Additive}: Gaussian noise $\vz_i \sim \mathcal{N}(0 , 100)$ directly added to the image, and \\
\textbf{(ii) Impulse}: Salt and Pepper noise added by setting 90\% of the pixels to 0 or 1.

\begin{figure}[htb!]
\subfloat[Original Image]{\includegraphics[width=0.24\columnwidth]{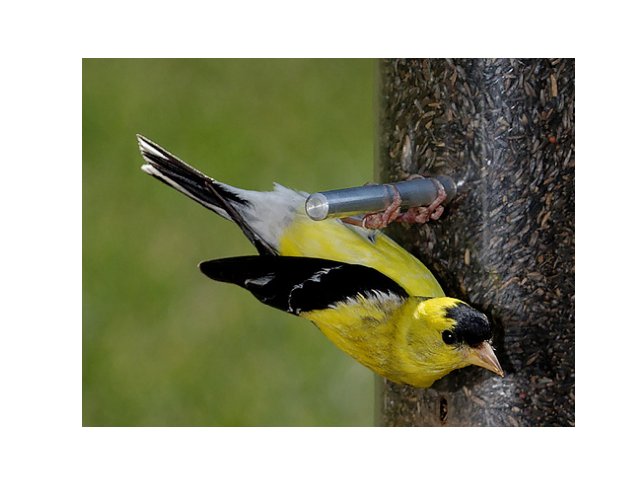}}
\subfloat[Gaussian Noise]{\includegraphics[width=0.24\columnwidth]{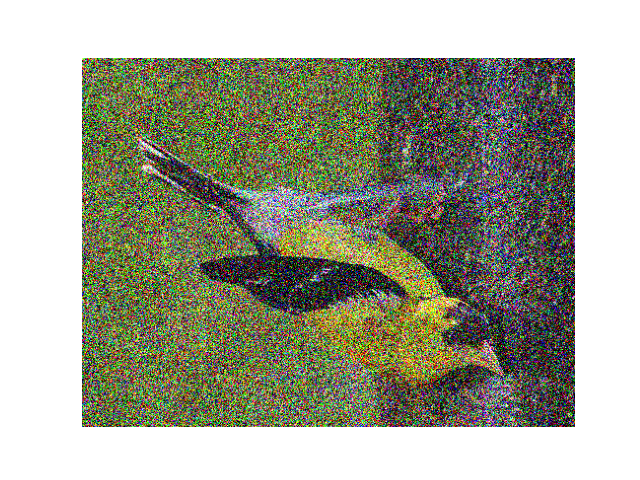}}
\subfloat[Salt \& Pepper Noise]{\includegraphics[width=0.24\columnwidth]{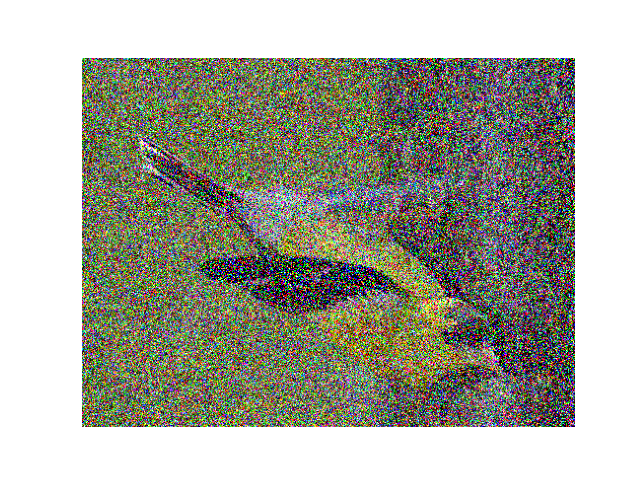}}
\subfloat[Gaussian Blur]{\includegraphics[width=0.24\columnwidth]{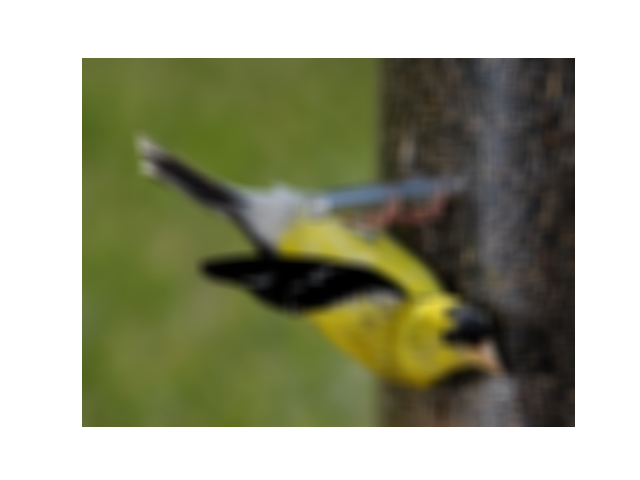}}
\caption{\small \textbf{Feature Corruption}: shows the effect of the perturbations added to image. }
\label{figure:corr_feat}
\end{figure}

\begin{figure}[htb!]
\subfloat[Clean Data]{\includegraphics[width=0.24\columnwidth]{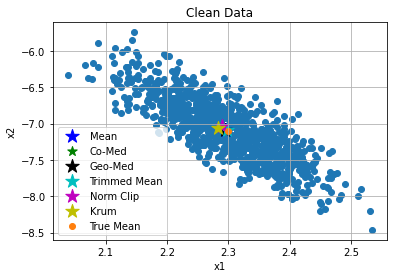}}
\subfloat[10\% corruption]{\includegraphics[width=0.24\columnwidth]{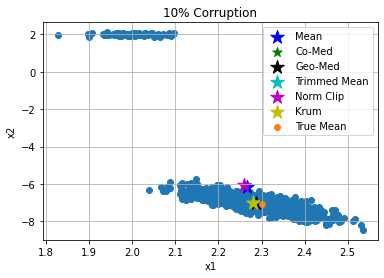}}
\subfloat[30\% corruption]{\includegraphics[width=0.24\columnwidth]{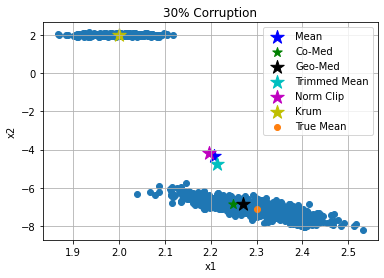}}
\subfloat[45\% corruption]{\includegraphics[width=0.24\columnwidth]{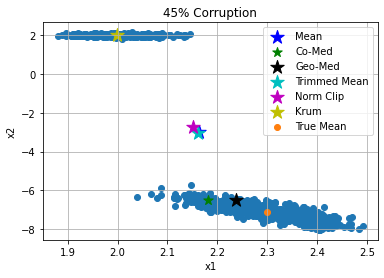}}
\caption{\small \textbf{Gradient Corruption}:This Toy example in 2 dimensions visually demonstrates the superior robustness properties of \textsc{Gm} for robust mean estimation (e.g. estimating the aggregated gradient) for increasing fraction of (None to 45\%) gradients corrupted with additive Gaussian noise.}
\label{figure:robust_gar_val}
\end{figure} 

\paragraph{\textbf{Gradient Corruption:}}
In distributed training over multiple machines the communicated gradients  can be noisy, e.g., due to hardware issues or simply because some nodes are adversarial and aim to maliciously disrupt the training. Using standard noise models for gradient corruption~\cite{fu1998penalized, xie2019zeno, bernstein2018signsgd} we directly corrupt the gradients in the following manner: \\ 
\textbf{(i) Additive}: adversary adds random Gaussian noise $\vz_i \sim \mathcal{N}(0, 100)$ to the true gradient, and\\
\textbf{(ii)Scaled Bit Flip}: corrupted gradients $\vg_t^{c}$ are  the scaled bit flipped version of the true gradient~\cite{bernstein2018signsgd} estimate; in particular, we use the following scale: $\vg_t^c = - 100 \vg_t$. 

\paragraph{\textbf{(c) Label Corruption:}} We simulate the important real world backdoor attack~\cite{shen2019learning, tolpegin2020data} where the goal of the adversary is to bias the classifier towards some adversary chosen class. To simulate this behavior: at each iteration we flip the labels of randomly chosen $\psi$ fraction of the samples to a \textbf{target} label (e.g. in Fashion-MNIST we use 8:bag as the backdoor label). 

\paragraph{\textbf{Dynamic Corruption Strategy.}}
In order to represent a wide array of real world adversarial scenarios we use the following dynamic corruption strategy: At each iteration, we randomly sample $\psi$ fraction of the samples malicious. Note that this implies none of the samples are trustworthy and thus makes the robust estimation problem truly unsupervised. Note that, this is a strictly stronger and realistic simulation of corruption compared to existing literature where a fixed set of samples are treated as malicious throughout the training. 

\subsection{Discussion}
\paragraph{\textbf{Discussion on the Test Performance}}
In Table~\ref{table:fmnist} we observe that without corruption both \textsc{BgmD} and \textsc{GmD} are able to achieve similar accuracy as the baseline (i.e., \textsc{SGD}). Conversely, \textsc{CmD} has a significant sub-optimality gap even in the clean setting. We observe the same trend under different corruption models over various levels of corruption. When corruption is high, \textsc{SGD} starts to diverge  after a few iterations. While \textsc{CmD} doesn't diverge, at higher level of corruptions its performance significantly degrades. On the other hand, both \textsc{GmD} and \textsc{BGmD} remain robust and maintain their test accuracy as expected from thier strong theoretical guarantees. Surprisingly, \textbf{\textsc{BGmD} not only maintains similar robustness as \textsc{GmD}, in several experiments it even outperforms} the performance of \textsc{GmD}. We hypothesize that this might be because of implicit regularization properties of \textsc{BGmD} in over-parameterized settings and leave it as a future problem. 

\paragraph{\textbf{Discussion on the Relative Speedup.}} 
The main goal of this work is to speed up \textsc{Gm} based robust optimization methods. To this end, we verify the efficiency of different methods via plotting the performance on test set as a function of the \textbf{wall clock time}. Figure~\ref{figure:fmnist.image_corr},~\ref{figure:fmnist.grad_corr} and~\ref{figure:fmnist.label_corr} suggest that under a variety of corruption settings \textsc{BgmD} is able to achieve significant speedup over \textsc{GmD} often by more than 3x while maintaining similar (sometime even better) test performance as \textsc{GmD}.

\paragraph{\textbf{Choice of $k$ and the Role of Memory}} 
In order to study the importance the memory mechanism we train a 4-layer CNN on MNIST~\cite{lecun1998gradient} dataset while retaining $0<\beta \leq 1$ fraction of  coordinates, i.e., $k = \beta d$. Figure~\ref{figure:choice_k} shows that the training runs that used memory (denoted by M) were able to achieve a similar accuracy while using significantly fewer coordinates. 
\setlength{\intextsep}{0pt}%
\setlength{\columnsep}{3pt}%
\begin{wrapfigure}{r}{0.36\textwidth}
\centering
\includegraphics[width=0.36\textwidth]{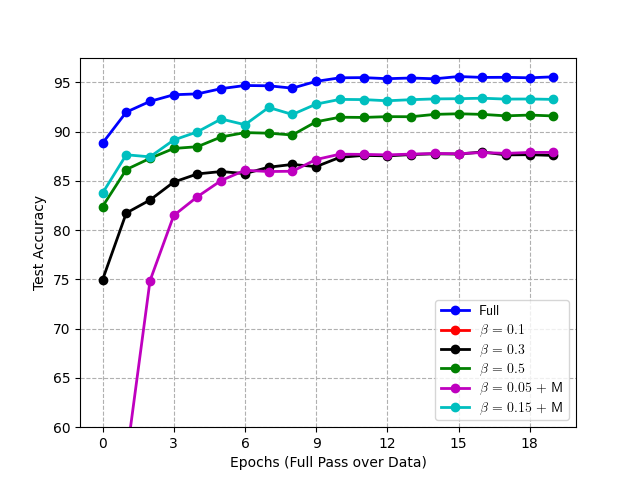}
\caption{\label{figure:choice_k}\footnotesize{\textsc{BGMD} with different values of $k$ and with / without memory on MNIST using CNN demonstrating the importance of memory augmentation}}
\vspace{-2mm}
\end{wrapfigure}

\paragraph{\textbf{Discussion on Over parameterization}}
Intuitively, as a consequence of over-parameterization, large scale deep learning models are likely to have sparse gradient estimates \cite{shi2019understanding}. In the context of robust optimization this would imply that performing robust gradient estimation in a low-dimensional subspace has little to no impact in the downstream optimization task especially in case of over-parameterized models.
Thus, by judiciously selecting a block of co-ordinates and performing robust gradient estimation only in this low-dimensional subspace is a practical approach to leverage highly robust estimators such as \textsc{Gm} even in high dimensional setting which was previously intractable. This means \textsc{BGmD} would be an attractive choice especially in overparameterized setting. In order to validate this, we perform a small experiment: we take a tiny 412 parameter CNN and a large 1.16M parameter CNN and train both on simple MNIST dataset. In addition to looking at the train and test performance we also compute relative residual error $1 - \xi = \|\mG_t - \gC_k(\mG_t)\|_F^2/\|\mG_t\|_F^2$  Lemma~\ref{lemma:norm_sampling} at each gradient step (iteration). We plot the relative residual, train and test performance for different values of $k = \beta d$ where $0 < \beta \leq 1$ in Figure~\ref{figure:supp.role.k.lenet}, ~\ref{figure:supp.role.k.tiny}. \footnote{Just to clarify, both these experiments are performed in clean setting}

\begin{figure}[htb]
\subfloat[\footnotesize{Relative Residual}]{\includegraphics[width=0.33\columnwidth]{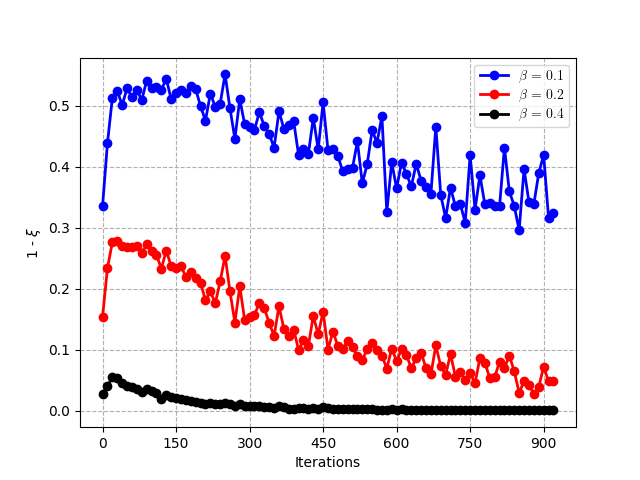}}
\subfloat[\footnotesize{Test Accuracy}]{\includegraphics[width=0.33\columnwidth]{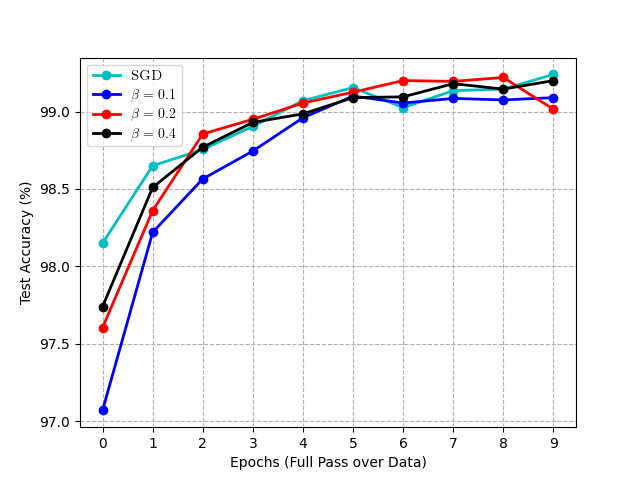}}
\subfloat[\footnotesize{Train Loss}]{\includegraphics[width=0.33\columnwidth]{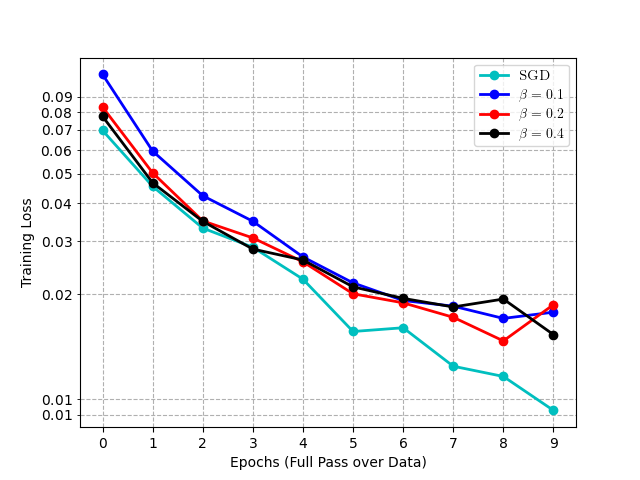}}
\caption{\footnotesize{\textbf{Large Model}: We train an over-parameterized ($d$ = 1.16 M) parameter CNN on MNIST~\cite{lecun1998gradient} dataset. (a) shows relative residual error as a function of iterations (gradient steps) (b) and (c) show the corresponding test and training performance. The residual error starts to approaches zero as training progresses even for very small values of $k$ explaining negligible impact on training and test performance.}} 
\label{figure:supp.role.k.lenet}
\end{figure}

\begin{figure}[htb]
\subfloat[\footnotesize{Relative Residual}]{\includegraphics[width=0.33\columnwidth]{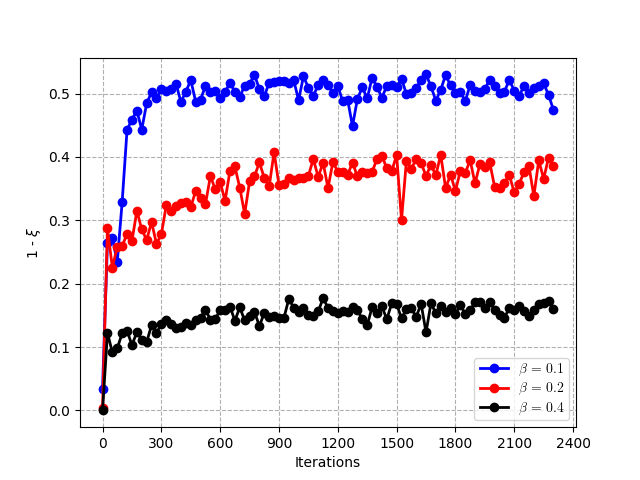}}
\subfloat[\footnotesize{Test Accuracy}]{\includegraphics[width=0.33\columnwidth]{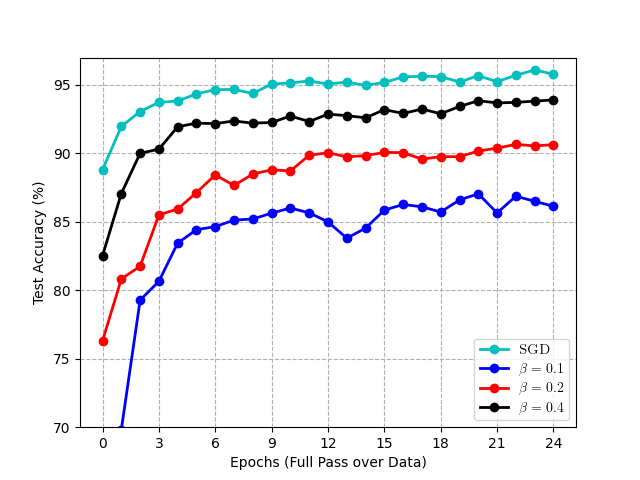}}
\subfloat[\footnotesize{Train Loss}]{\includegraphics[width=0.33\columnwidth]{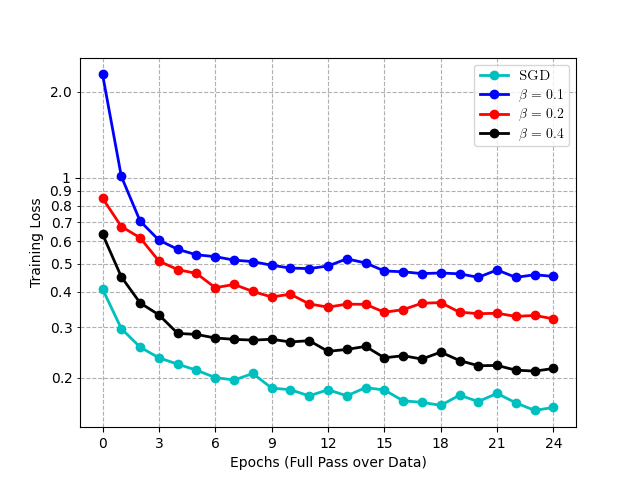}}
\caption{\footnotesize{\textbf{Tiny Model}: Here we instead train a super tiny $d=412$ parameter CNN on MNIST. (a),(b),(c) plots relative residual, train and test performance respectively. We can see that even in this setting $(1 - \xi) > k/d = \beta$ as we remarked in Lemma~\ref{lemma:norm_sampling}. However, as training progresses the gradients become closer to uniform explaining the optimality gap of \textsc{BGmD} with small values of $k$.}}
\label{figure:supp.role.k.tiny}
\end{figure}

It is clear from these experiments that in over-parameterized settings $\xi \to 0$ as training progresses potentially due to the inherent sparsity of overparameterized models. This implies our block selection procedure is often near lossless and thus \textsc{BGmD} even with aggressively small $k$ can achieve near optimal performance. And since it is a few orders of magnitude faster than \textsc{GmD} - it is an attractive robust optimization algorithm in high dimensions. However, in the tiny CNN setting, as training progresses the gradients tend to become more uniform ~ implying higher loss due to block co-ordinate selection. This implies, in these settings, \textsc{BGmD} needs to use a larger $k$ in order to retain the optimal performance. However, note that in these small scale settings, \textsc{GmD} is not expensive and thus for robust optimization , a larger value of $k$ or at the extreme $k=d$ i.e. running \textsc{GmD} can be practical. 

\paragraph{\textbf{Summary of Results}}
The above empirical observations highlight the following insights: \\
\textbf{(a)} For challenging corruption levels/models, \textsc{Gm} based methods are indeed superior while standard \textsc{SGD} or \textsc{CmD} can be significantly inaccurate, \\
\textbf{(b)} In all of our reported experiments, \textsc{BGmD} was run with $k$ set to 10\% of the number of parameters. Despite using such a small ratio \textsc{BGmD} retains the strong performance of \textsc{GmD}.\\
\textbf{(c)} By judiciously choosing $k$, \textsc{BGmD} is more efficient than \textsc{GmD}, and \\
\textbf{(d)} memory augmentation is vital for \textsc{BGmD} to attain a high accuracy while employing relatively small values of $k$.

\section{Conclusion}
\label{section:conclusion}
We proposed \textsc{BGmD}, a method for robust, large-scale, and high dimensional optimization that achieves  the optimal statistical breakdown point while delivering  significant savings in the computational costs per iteration compared to existing \textsc{Gm}-based strategies. \textsc{BGmD} employs  greedy coordinate selection and memory augmentation which allows to aggressively select very few coordinates while attaining strong convergence properties comparable to  \textsc{Gm}-SGD under standard non-convex settings. Extensive deep learning experiments demonstrated the efficacy of \textsc{BGmD}. 

\section{Limitation} Currently,   \textsc{BGmD} samples the coordinates for computing geometric median using just the norm of each coordinate in the gradient matrix. In large deep networks, the gradients should have more structure that cannot be captured just by the norm of coordinates. We 
 leave further investigation into leveraging gradient structure for more effective coordinate selection for future work. 
 
\section{Broader Impact}
While the current work is more theoretical and algorithmic in nature, we believe that robust learning -- the main problem addressed in the work -- is a key requirement for deep learning systems to ensure that a few malicious data points do not completely de-rail the system and produce offensive/garbage output.

\clearpage
\appendix
\begin{center}
    \textbf{\Large Missing Proofs of \textsc{BgmD}}\vspace{5mm}
\end{center}
\section{Proof of Lemma \ref{lemma:norm_sampling} (Sparse Approximation)}
\label{proof:norm_sampling}
\newtheorem*{lemma1*}{Lemma \ref{lemma:norm_sampling}}
\begin{lemma1*}
Algorithm \ref{algorithm:co_sparse_approx} yields a contraction approximation, i.e., $\E_{\gC_k} \left[\|\gC_k(\mG)-\mG\|_F^2 |\vx\right] \leq (1-\xi)\|\mG\|_F^2 , \; \frac{k}{d} \leq \xi \leq 1,$ 
where $\gC_k(\mG)_{i, j\in \omega_k} = \mG_{i, j},$ and $\gC_k(\mG)_{i, j\notin \omega_k} =0$.
\end{lemma1*}
\begin{proof}
Suppose, $\Omega_k = \{\omega \subseteq \{1,2, ..., d\}: |\omega|=k\}$ is the set of all possible subsets of cardinality $k$ i.e. $|\Omega_k|=\binom{d}{k}$. Also let the embeddings produced by random co-ordinate sampling and active norm sampling (Algorithm \ref{algorithm:co_sparse_approx}) are denoted by $\gC^r_k(\cdot)$ and $\gC^{n}_k(\cdot)$ respectively and let the $i-$th row $\mG_t[i,:] = \vg_t^i$. Then, we can bound the reconstruction error in expectation $\forall \mG_t \in \sR^{b \times d}$ as:
\begin{align*}
\E_{\gC^{n}_k} \left[\|\gC^{n}_k(\mG_t)-\mG_t\|_F^2 |\mG\right]
&\leq \E_{\gC^{n}_k} \left[\|\gC^{r}_k(\mG_t)-\mG_t\|^2 |\mG_t\right]\\
&=\sum_{i=1}^n \E_{\gC^{n}_k} \left[\|\gC^{r}_k(\vg_t^i)-\vg_t^i\|^2 |\vg_t^i\right]\\
&= \sum_{i=1}^n \frac{1}{|\Omega_k|}\sum_{\omega \in \Omega_k}\sum_{i=1}^{d}\vx_i^2 \mathbb{I}\{i\notin \omega\}\\
&=\sum_{i=1}^n(1-\frac{k}{d})\|\vg_t^i\|^2\\
& = (1-\frac{k}{d}) \|\mG_t\|^2_F
\end{align*} 
This concludes the proof.
\end{proof}

\section{Computational Complexity of \textsc{BGmD} iterates}
\begin{proposition}
\label{proposition:bgmd_cost}\textup{\textbf{(Computational Complexity).}}
Given an $\epsilon$- approximate \textsc{Gm} oracle , each gradient aggregation of \textsc{BGmD} with block size $k$ incurs a computational cost of: $\gO(\frac{k}{\epsilon^2} + bd)$.
\end{proposition}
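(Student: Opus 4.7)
The plan is to account for the cost of each line of Algorithm~\ref{algorithm:full} executed during a single aggregation round and to observe that the only step that depends on the accuracy parameter $\epsilon$ is the call to the \textsc{Gm} oracle, which now operates on vectors in $\sR^k$ rather than $\sR^d$. Concretely, I would partition the per-iteration work into (i) bookkeeping operations on the Jacobian $\mG_t \in \sR^{b \times d}$ (memory augmentation, block selection, residual computation, memory update, and the final parameter update), and (ii) the approximate geometric median computation on the reduced Jacobian $\boldsymbol{\Delta}_t$.

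For (i), the memory augmentation $\mG_t[i,:] \gets \gamma \mG_t[i,:] + \hat{\vm}_t$ requires $\gO(bd)$ work, as does the residual step $\mM_t = \mG_t - \boldsymbol{\Delta}_t$ and the averaging $\hat{\vm}_{t+1} = \frac{1}{b}\sum_i \mM_t[i,:]$. Inside Algorithm~\ref{algorithm:co_sparse_approx}, computing the per-coordinate squared norms $s_j = \|\mG_t[:,j]\|^2$ takes $\gO(bd)$ time and sampling a subset $\sI_k$ of size $k$ with probabilities proportional to $s_j$ can be done in $\gO(d)$ using a standard prefix-sum / alias construction. Finally, the parameter update $\vx_{t+1} := \vx_t - \tilde{\vg}_t$ costs $\gO(d)$ (indeed only $\gO(k)$ coordinates change). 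All of these contributions are absorbed into the $\gO(bd)$ term.

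For (ii), the key observation is that $\boldsymbol{\Delta}_t$ has support on only $k$ coordinates, so the geometric median subroutine is effectively invoked on $b$ vectors living in an ambient space of dimension $k$. Invoking the $\epsilon$-approximate \textsc{Gm} algorithm of Cohen et al.\ \cite{cohen2016geometric}, whose complexity scales as $\gO(d'/\epsilon^2)$ for input dimension $d'$, with $d'=k$ yields a cost of $\gO(k/\epsilon^2)$ for this step. Summing contributions (i) and (ii) gives the stated bound $\gO(k/\epsilon^2 + bd)$.

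I do not anticipate any real obstacle: the proof is essentially line-by-line accounting, and the only nontrivial external ingredient is the dimension dependence of the approximate \textsc{Gm} subroutine, which is already available off-the-shelf. The one point worth being careful about is ensuring that the $\gO(bd)$ terms incurred by the bookkeeping and norm-based sampling genuinely dominate the lower-order $\gO(d)$ and $\gO(k)$ terms so that they can be cleanly folded into a single additive $\gO(bd)$ contribution alongside the $\gO(k/\epsilon^2)$ \textsc{Gm} cost.
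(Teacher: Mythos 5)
Your proposal is correct and follows essentially the same line-by-line accounting as the paper's proof: each bookkeeping step (memory augmentation, norm-based sampling, residual/memory update) costs $\gO(bd)$, and the $\epsilon$-approximate \textsc{Gm} call on the $k$-dimensional reduced Jacobian costs $\gO(k/\epsilon^2)$ via \cite{cohen2016geometric}. No substantive differences to report.
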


\begin{proof}
Let us first recall, one gradient aggregation operation: given $b$ stochastic gradients $\vg_i \; \forall i \in [b]$, the goal of the aggregation step is to compute $\Tilde{\vg}$ for subsequent iterations of the form $\vx_{t+1} = \vx_t - \eta \Tilde{g}$. Further, $\mG_t \in \sR^{b \times d}$ denote the gradient jacobian where $i$ th row $\mG_t[i, :]$ corresponds to $\vg_i$. Also assume that we have at our disposal an oracle $\textsc{Gm}(\cdot)$ that can compute an  $\epsilon$-approximate \textsc{Gm} $\vg_i \in \sR^d \; \forall i \in [b]$ using $\gO(\frac{d}{\epsilon^2})$ compute \cite{cohen2016geometric}. Recall that \textsc{BGmD} (Algorithm \ref{algorithm:full}) is composed of the following main steps: \\*[0.3 cm] 
\textbullet \; \textbf{Memory Augmentation}: At each gradient aggregation step, \textit{\textsc{BgmD} needs to add back the stored memory $\hat{\vm}_t$ compensating for accumulated residual error incurred in previous iterations to $\mG_t$ such that $\mG_t[i, :] = \gamma \mG_t[i, :] + \hat{\vm}_t$. Note that, this is a row wise linear (addition) operation implying $\gO(bd)$ associated cost}.\\*[0.2 cm] 
\textbullet \; \textbf{Active Norm Sampling}: At each gradient aggregation step: \textit{\textsc{BgmD} selects $k$ of the $d$ coordinates i.e. $k$ columns of $\mG_t$ using Algorithm \ref{algorithm:co_sparse_approx}. This requires computing the $\ell_2$ norm distribution along the $d$ columns, followed by sampling $k$ of them proportional to the norm distribution. The computational complexity of computing  Active Norm Sampling is $\gO(bd)$} \cite{wang2017provably}.\\*[0.2 cm] 
\textbullet \; \textbf{Compute $\mM_{t+1}$}: Further, memory needs to be updated to $\mM_{t+1}$ for future iterates implying another row wise linear operation incurring $\gO(bd)$ compute.\\*[0.2 cm] 
\textbullet \; \textbf{Low Rank \textsc{Gm}}: \textit{Note that \textsc{BGmD} needs to run $\textsc{Gm}(\cdot)$ over $\sR^k$ implying a cost of $\gO(\frac{k}{\epsilon^2})$}. \\*[0.2 cm] 
Putting it together, the total cost of computing gradient aggregation per iteration using Algorithm \ref{algorithm:full} is then $\gO(\frac{k}{\epsilon^2} + bd)$. This concludes the proof. 
\end{proof}

\section{Proof of Lemma \ref{proposition:compute}}
\newtheorem*{proposition*}{Lemma \ref{proposition:compute}}
\begin{proposition*}
\textbf{(Choice of k)}
Let $k \leq \gO(\frac{1}{F} - b\epsilon^2)\cdot d$. Then, given an $\epsilon$- approximate \textsc{Gm} oracle,  Algorithm \ref{algorithm:full} achieves a factor $F$ speedup over \textsc{Gm-SGD} for aggregating $b$ samples.
\end{proposition*}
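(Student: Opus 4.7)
The plan is to obtain the bound on $k$ by a direct ratio comparison of the per-iteration costs of \textsc{Gm-SGD} and \textsc{BGmD}, which have already been characterized earlier in the paper. Specifically, I would invoke Proposition~\ref{proposition:bgmd_cost}, which says each \textsc{BGmD} aggregation costs $\gO(k/\epsilon^2 + bd)$, and the standard bound from \cite{cohen2016geometric} giving \textsc{Gm-SGD} a per-iteration cost of $\gO(d/\epsilon^2 + bd)$ (with the first term typically dominating). Since the two methods differ only in the aggregation step and the back-propagation cost $T_b^{(t)}$ is identical, comparing aggregation costs suffices for the relative speedup.

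First I would write down the definition of speedup as the ratio of costs, namely
\begin{equation*}
\mathrm{Speedup} \;=\; \frac{C_{\textsc{Gm-SGD}}}{C_{\textsc{BGmD}}} \;=\; \frac{d/\epsilon^2}{k/\epsilon^2 + bd},
\end{equation*}
and then require this ratio to be at least $F$. Rearranging, the condition $d/\epsilon^2 \geq F(k/\epsilon^2 + bd)$ is equivalent to $d(1 - F b \epsilon^2) \geq F k$, i.e.\
\begin{equation*}
k \;\leq\; d\left(\tfrac{1}{F} - b\epsilon^2\right),
\end{equation*}
matching the claimed bound up to constants absorbed in the $\gO(\cdot)$. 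I would note in passing that the bound is only informative when $b\epsilon^2 < 1/F$, which is exactly the regime where \textsc{Gm}-computation dominates the aggregation cost and our block-coordinate approach can meaningfully accelerate it.

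There is no real mathematical obstacle here, since the argument is purely algebraic once Proposition~\ref{proposition:bgmd_cost} and the \textsc{Gm} oracle cost are plugged in. The only subtlety worth flagging is whether the $bd$ term from the memory-augmentation and residual-update steps in \textsc{BGmD} should be grouped with back-propagation (and thus cancelled against \textsc{Gm-SGD}'s own $bd$ term) or kept separate; keeping it separate yields the stated clean expression and is the conservative choice. I would close by remarking that, as noted right after the lemma, the $b\epsilon^2$ correction is negligible in practice, so effectively $k = \gO(d/F)$ suffices for an $F$-fold speedup.
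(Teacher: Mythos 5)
Your proposal is correct and follows essentially the same route as the paper: it invokes the $\gO(k/\epsilon^2 + bd)$ per-aggregation cost of \textsc{BGmD} (Proposition~\ref{proposition:bgmd_cost}), compares it to the $\gO(d/\epsilon^2)$ cost of \textsc{Gm-SGD}, and rearranges the inequality $\gO(d/\epsilon^2) \geq F\cdot\gO(k/\epsilon^2 + bd)$ to obtain $k \leq \gO(\frac{1}{F} - b\epsilon^2)\cdot d$. Your added remarks on the regime $b\epsilon^2 < 1/F$ and on the placement of the $bd$ term are sensible clarifications but do not change the argument.
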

\begin{proof}
First, note that: for one step of gradient aggregation \textsc{Gm-SGD} makes one call to \textsc{Gm}($\cdot$) oracle implying a $\gO(\frac{d}{\epsilon^2})$ computational cost per gradient aggregation.\\*[0.2 cm] 
Now, let us assume, $k = \beta d$ where $0 < \beta \leq 1$ denotes the fraction of total gradient dimensions retained by \textsc{BGmD}. Then using Proposition \ref{proposition:bgmd_cost} we can find a bound on $\beta$ such that gradient aggregation step of \textsc{BGmD} has a linear speedup over that of \textsc{Gm-SGD} by a linear factor $F$. 
\begin{equation}
    \begin{aligned}
    & \gO(\frac{d}{\epsilon^2}) \geq F \cdot \gO(\frac{k}{\epsilon^2} + bd)\\
    \Rightarrow & \gO(\frac{1}{\epsilon^2}) \geq \gO(\frac{F \beta}{\epsilon^2}) + \gO(Fb)\\
    \Rightarrow & \gO(F \beta) \leq \gO(1) - \gO(Fb\epsilon^2)\\
    \Rightarrow & \gO(\beta) \leq \gO(\frac{1}{F}) - \gO(b\epsilon^2)
    \end{aligned}
\end{equation}
This concludes the proof.
\end{proof}
\section{Detailed Statements of the convergence Theorems}
We here state the full version of the main convergence theorems 
\newtheorem*{theorem1*}{Theorem \ref{theorem:nonconvex-simple}}
\begin{theorem1*}
[\textbf{Smooth Non-convex}]
Consider the general case where the functions $f_i$ correspond to non-corrupt samples $i \in \sG$ i.e. $f=\frac{1}{\sG}\sum_{i\in \sG} f_i(\mathbf{x})$ are \textbf{non-convex} and \textbf{smooth} (Assumption \ref{assumption:smooth}). Define, $R_0 := f(\vx_0)-f(\vx^\ast)$ where $\vx^\ast$ is the true optima and $\vx_0$ is the initial parameters. Run Algorithm \ref{algorithm:full} with compression factor $\xi$ (Lemma \ref{lemma:norm_sampling}), learning rate $\gamma =  1/2L$ and $\epsilon-$approximate \textup{\textsc{Gm}($\cdot$)} oracle in presence of $\alpha-$corruption (Definition \ref{definition:byzantine}) for $T$ iterations. Then it holds that: 
\begin{equation}
\label{eq:thm:nonconvex-app}
\begin{aligned}
\frac{1}{T} \sum_{t=0}^{T-1}\E\|\nabla f(\vx_t)\|^2 &\leq 
\frac{8R_0}{\gamma T} + 8L\gamma\sigma^2 + \frac{48L^2 \gamma^2\sigma^2(1-\xi^2)}{\xi^2} \\&+ \frac{2304 \sigma^2}{(1-\alpha)^2}\left[1+\frac{4(1-\xi^2)}{\xi^2}\right] + \frac{48\epsilon^2}{\gamma^2|\sG|^2(1-\alpha)^2}
\\&=\gO\Bigg( \frac{LR_0}{T}+
\frac{\sigma^2\xi^{-2}}{(1-\alpha)^2}+\frac{L^2\epsilon^2}{|\sG|^2(1-\alpha)^2}
\Bigg)
\end{aligned}
\end{equation}
\end{theorem1*}
\newtheorem*{theorem2*}{Theorem \ref{theorem:plc-simple}}
\begin{theorem2*}
[\textbf{Non-convex under PLC}]
Assume in addition to \textbf{non-convex} and \textbf{smooth} (Assumption \ref{assumption:smooth}) the functions $f_i$ correspond to non-corrupt samples
also satisfy the \textbf{Polyak-\L{}ojasiewicz Condition} (Assumption \ref{assumption:plc}) with parameter $\mu$. 
After $T$ iterations Algorithm \ref{algorithm:full} with compression factor $\xi$ (Lemma \ref{lemma:norm_sampling}), learning rate $\gamma =  1/4L$ and $\epsilon-$approximate \textup{\textsc{Gm}($\cdot$)} oracle in presence of $\alpha-$corruption (Definition \ref{definition:byzantine}) satisfies:
\begin{equation}
\begin{aligned}
\E\|\hat{\vx}_T-\vx^\ast\|^2 &\leq
 \frac{16(f(\vx_0)-f^\ast)}{\mu^2\gamma}\left[1-\frac{\mu\gamma}{2}\right]^T+\frac{16L\gamma\sigma^2}{\mu^2}+ \left[\frac{80\gamma^2\sigma^2L^2(1-\xi^2)}{\mu^2\xi^2}\right]\\
&+\frac{3072 \sigma^2}{\mu^2(1-\alpha)^2}\left[1+\frac{4(1-\xi^2)}{\xi^2}\right]+\frac{64\epsilon^2}{\mu^2\gamma^2 |\sG|^2(1-\alpha)^2}\\&= \gO\Bigg( \frac{LR_0}{\mu^2}\left[1-\frac{\mu}{8L}\right]^T+\frac{\sigma^2\xi^{-2}}{\mu^2(1-\alpha)^2}+\frac{L^2\epsilon^2}{\mu^2 |\sG|^2(1-\alpha)^2}\Bigg)
 \end{aligned}
 \end{equation}
for a global optimal solution  $\vx^\ast \in \gX^\ast$. 
\\ 
Here, $\hat{\vx}_T := \frac{1}{W_T}\sum_{t=0}^{T-1}w_t\vx_t$ with weights $w_t := (1-\frac{\mu}{8L})^{-(t+1)}$, $W_T := \sum_{t=0}^{T-1}w_t$.
\end{theorem2*}
\section{Useful Facts}
\begin{fact}
\label{fact:1}
$
\E\|\sum_{i\in \gA} \va_i\|^2 \leq |\gA|\sum_{i\in \gA}\E\| \va_i\|^2
$ \\
This can be seen as a consequence of the Jensen's inequality.
\end{fact}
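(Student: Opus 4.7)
The statement is the standard inequality relating the squared norm of a sum to the sum of squared norms, with a factor of the cardinality. My plan is to establish it first in the deterministic sense (pointwise, before expectation) and then take expectations, since expectation preserves inequalities.

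The cleanest route, and the one the author hints at, goes through Jensen's inequality applied to the convex function $\vv \mapsto \|\vv\|^2$. First I would rewrite the sum as a scaled average:
\begin{equation*}
\Bigl\|\sum_{i\in \gA} \va_i\Bigr\|^2 = |\gA|^2 \Bigl\|\frac{1}{|\gA|}\sum_{i\in \gA} \va_i\Bigr\|^2.
\end{equation*}
Then Jensen's inequality, using that $\|\cdot\|^2$ is convex on the underlying Hilbert space, gives
\begin{equation*}
\Bigl\|\frac{1}{|\gA|}\sum_{i\in \gA} \va_i\Bigr\|^2 \leq \frac{1}{|\gA|}\sum_{i\in \gA} \|\va_i\|^2.
\end{equation*}
Combining these two displays yields $\|\sum_{i\in \gA} \va_i\|^2 \leq |\gA|\sum_{i\in \gA}\|\va_i\|^2$ deterministically, and taking expectations on both sides (monotonicity of $\E$) produces the desired bound.

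An equivalent derivation, which I would mention as a sanity check, expands the squared norm as an inner product $\langle \sum_i \va_i,\sum_j \va_j\rangle=\sum_{i,j\in\gA}\langle \va_i,\va_j\rangle$ and applies the AM--GM bound $\langle \va_i,\va_j\rangle\leq \tfrac{1}{2}(\|\va_i\|^2+\|\va_j\|^2)$ to each of the $|\gA|^2$ cross terms; summing reproduces the same coefficient $|\gA|$ on the right-hand side.

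There is essentially no obstacle here: the only subtlety to flag is that the bound holds pointwise (before the expectation), so no independence or zero-mean assumption on the $\va_i$ is needed, and the factor $|\gA|$ cannot be improved without such assumptions (for instance, if the $\va_i$ were i.i.d.\ mean-zero, the factor would drop to $1$). This is precisely why the inequality is invoked in the earlier lemmas as a worst-case tool for terms whose dependence structure is not controlled.
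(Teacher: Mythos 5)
Your proof is correct and follows exactly the route the paper indicates: the pointwise bound via Jensen's inequality applied to the convex function $\|\cdot\|^2$ on the rescaled average, followed by taking expectations. The paper offers no more detail than the phrase ``consequence of Jensen's inequality,'' so your write-up (including the inner-product/AM--GM sanity check and the remark that no independence is needed) is a faithful and complete elaboration of the intended argument.
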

\begin{fact}[\textbf{Young's Inequality}]
\label{fact:2} 
For any $\beta>0$,
\begin{equation}
    \E[\langle\va,\vb\rangle]\leq \frac{\beta}{2}\E\| \va\|^2+ \frac{1}{2\beta}\E\| \vb\|^2
\end{equation}
This can be seen as a special case of the weighted AM-GM inequality.
\end{fact}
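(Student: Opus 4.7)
The plan is to prove the pointwise inequality $\langle \va,\vb\rangle \le \tfrac{\beta}{2}\|\va\|^2 + \tfrac{1}{2\beta}\|\vb\|^2$ for every realization of $(\va,\vb)$, and then take expectations of both sides (which preserves the inequality by monotonicity of $\E$). This reduces the task to a deterministic inner-product bound on a real (or complex) inner product space.

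To get the pointwise inequality, I would begin from the nonnegativity of a squared norm. Specifically, for any $\beta>0$, consider the vector $\sqrt{\beta}\,\va - \tfrac{1}{\sqrt{\beta}}\,\vb$, whose squared norm is nonnegative:
\begin{equation*}
0 \;\le\; \Bigl\|\sqrt{\beta}\,\va - \tfrac{1}{\sqrt{\beta}}\,\vb\Bigr\|^{2} \;=\; \beta\,\|\va\|^{2} \;-\; 2\,\langle \va,\vb\rangle \;+\; \tfrac{1}{\beta}\,\|\vb\|^{2}.
\end{equation*}
Rearranging yields $2\langle \va,\vb\rangle \le \beta\|\va\|^2 + \tfrac{1}{\beta}\|\vb\|^2$, and dividing by $2$ gives the deterministic version of the claim.

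The final step is to apply the expectation operator. Since $\langle \va,\vb\rangle \le \tfrac{\beta}{2}\|\va\|^2 + \tfrac{1}{2\beta}\|\vb\|^2$ holds for every outcome, linearity and monotonicity of expectation produce
\begin{equation*}
\E[\langle \va,\vb\rangle] \;\le\; \tfrac{\beta}{2}\,\E\|\va\|^{2} \;+\; \tfrac{1}{2\beta}\,\E\|\vb\|^{2},
\end{equation*}
which is exactly the stated Fact.

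There is essentially no hard step here: the proof amounts to ``complete the square'' plus take expectations. The only mild subtlety I would flag is ensuring that the expectations on the right-hand side are finite (otherwise the inequality is trivially true with $+\infty$ on the right), and noting that the parameter $\beta$ can be optimized: the choice $\beta = \sqrt{\E\|\vb\|^2/\E\|\va\|^2}$ recovers the Cauchy--Schwarz bound $\E[\langle\va,\vb\rangle]\le \sqrt{\E\|\va\|^2\,\E\|\vb\|^2}$, confirming the remark that this is a weighted AM--GM instance. No invocation of earlier results in the paper is required.
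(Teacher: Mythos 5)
Your proof is correct, and it is essentially the argument the paper has in mind: the paper offers no written proof beyond the remark that the inequality is a weighted AM--GM instance, and your completing-the-square derivation (expanding $\|\sqrt{\beta}\,\va - \tfrac{1}{\sqrt{\beta}}\vb\|^2 \ge 0$ and then taking expectations) is the standard way to make that remark precise. The observations about finiteness of the right-hand side and the optimal choice of $\beta$ are correct but not needed for how the Fact is used in the paper.
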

\begin{fact}[\textbf{Lemma 2 in \cite{stich2019unified}}]
\label{fact:3}
Let $\{a_t\}$ and $\{b_t\}$ be to non-negative sequences such that
\begin{equation}
    a_{t+1} \leq (1-r\gamma)a_t-s\gamma b_t+c,
\end{equation}
where $r,\gamma,s,c>0$ and $r\gamma<1$. Let $w_t = (1-r\gamma)^{-(t+1)}$ and $W_T = \sum_{t=0}^{T-1}w_t$. Then the following holds: 
\begin{equation}
    \frac{s}{W_T}\sum_{t=0}^{T-1}b_tw_t+ra_{T} \leq \frac{a_0}{\gamma}(1-r\gamma)^T+\frac{c}{\gamma}.
\end{equation}
\end{fact}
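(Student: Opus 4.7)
The plan is a standard telescoping argument using the weights $w_t$ chosen precisely to cancel the contraction factor $(1-r\gamma)$. First I would multiply both sides of the recursion $a_{t+1} \leq (1-r\gamma)a_t - s\gamma b_t + c$ by $w_t = (1-r\gamma)^{-(t+1)}$. The key algebraic identity is $w_t(1-r\gamma) = (1-r\gamma)^{-t} = w_{t-1}$ (with the convention $w_{-1} = 1$), so this yields
\begin{equation*}
    w_t a_{t+1} \leq w_{t-1} a_t - s\gamma w_t b_t + c w_t.
\end{equation*}
Summing from $t=0$ to $T-1$ causes the $a_t$ terms to telescope: the left-hand sum equals $\sum_{t=1}^{T} w_{t-1} a_t$, which differs from $\sum_{t=0}^{T-1} w_{t-1}a_t$ by $w_{T-1}a_T - w_{-1}a_0 = w_{T-1}a_T - a_0$. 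Rearranging gives
\begin{equation*}
    w_{T-1} a_T + s\gamma \sum_{t=0}^{T-1} w_t b_t \leq a_0 + c W_T.
\end{equation*}

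Next I would divide through by $\gamma W_T$, obtaining
\begin{equation*}
    \frac{s}{W_T}\sum_{t=0}^{T-1} w_t b_t + \frac{w_{T-1}}{\gamma W_T}\, a_T \;\leq\; \frac{a_0}{\gamma W_T} + \frac{c}{\gamma}.
\end{equation*}
Two elementary bounds on $W_T = \sum_{t=0}^{T-1}(1-r\gamma)^{-(t+1)}$ then convert this into the target inequality. Since $W_T \geq w_{T-1} = (1-r\gamma)^{-T}$, we immediately get $\frac{a_0}{\gamma W_T} \leq \frac{a_0}{\gamma}(1-r\gamma)^T$. For the coefficient on $a_T$, I would evaluate the geometric sum explicitly: $W_T = \frac{(1-r\gamma)^{-T}-1}{r\gamma}$, so
\begin{equation*}
    \frac{w_{T-1}}{\gamma W_T} \;=\; \frac{r(1-r\gamma)^{-T}}{(1-r\gamma)^{-T}-1} \;=\; \frac{r}{1-(1-r\gamma)^T} \;\geq\; r,
\end{equation*}
using $r\gamma < 1$ so that the denominator lies in $(0,1]$. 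Combining the two bounds yields exactly the stated conclusion.

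The proof is essentially routine; there is no serious obstacle. The only subtlety is choosing the right normalization for the weights so that summation telescopes cleanly and simultaneously gives lower bounds of the correct form on $w_{T-1}/(\gamma W_T)$ and $1/(\gamma W_T)$. Working out $W_T$ in closed form via the geometric series and then reading off the two inequalities is the single nontrivial step, but it is a short computation rather than a genuine obstacle.
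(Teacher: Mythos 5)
The paper does not prove this statement at all --- it is imported verbatim as Lemma 2 of the cited reference \cite{stich2019unified} --- so there is no in-paper argument to compare against. Your proof is correct and complete: the weighted telescoping with $w_t(1-r\gamma)=w_{t-1}$, the closed form $W_T=\frac{(1-r\gamma)^{-T}-1}{r\gamma}$ giving $\frac{w_{T-1}}{\gamma W_T}=\frac{r}{1-(1-r\gamma)^T}\geq r$, and the bound $W_T\geq w_{T-1}$ together yield exactly the stated inequality (using only $a_0,a_T\geq 0$), and this is the same standard argument used in the cited source.
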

\begin{fact}
\label{fact:4}
Let $f$ be a function that satisfies PLC (Assumption \ref{assumption:plc}) with parameter $\mu$. Then, $f$ satisfies the quadratic growth condition \textup{\cite{karimi2016linear, zhang2020new}}:
\begin{equation}
    f(\vx)-f^\ast \geq \frac{\mu}{2}\|\vx-\vx_p\|^2
\end{equation}
where $\vx_p$ is the projection of $\vx$ onto
the solution set $\gX^\ast$ . 
\end{fact}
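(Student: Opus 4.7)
The plan is to deduce quadratic growth from PLC via the standard gradient-flow argument, which is the route taken in \cite{karimi2016linear}. I would first introduce the continuous-time trajectory $\dot y(t) = -\nabla f(y(t))$ with initial condition $y(0) = \vx$, and track the suboptimality $\phi(t) := f(y(t)) - f^\ast$ along it. By the chain rule $\phi'(t) = -\|\nabla f(y(t))\|^2$, and the PLC assumption immediately upgrades this to the differential inequality $\phi'(t) \leq -2\mu\,\phi(t)$. Grönwall's lemma then yields $\phi(t) \leq \phi(0)\,e^{-2\mu t}$, so $f(y(t)) \to f^\ast$ along the flow.

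The decisive step is to control the total arclength $L := \int_0^\infty \|\dot y(t)\|\,dt = \int_0^\infty \|\nabla f(y(t))\|\,dt$. Using PLC in the form $\|\nabla f(y(t))\| \geq \sqrt{2\mu\,\phi(t)}$ together with $\phi'(t) = -\|\nabla f(y(t))\|^2$, I would verify the pointwise inequality
\[
\frac{d}{dt}\sqrt{\phi(t)} \;=\; \frac{\phi'(t)}{2\sqrt{\phi(t)}} \;=\; -\frac{\|\nabla f(y(t))\|^2}{2\sqrt{\phi(t)}} \;\leq\; -\sqrt{\mu/2}\;\|\nabla f(y(t))\|,
\]
and then integrate from $0$ to $\infty$ to obtain the key path-length bound $L \leq \sqrt{2(f(\vx)-f^\ast)/\mu}$.

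To conclude, finiteness of $L$ makes $y(\cdot)$ a Cauchy trajectory and hence convergent to some limit $y_\infty$; continuity of $f$ together with $\phi(t)\to 0$ places $y_\infty \in \gX^\ast$. Because $\vx_p$ is the projection of $\vx$ onto the (convex, non-empty) solution set $\gX^\ast$ granted by Assumption \ref{assumption:plc}, I get
\[
\|\vx - \vx_p\| \;\leq\; \|\vx - y_\infty\| \;\leq\; L \;\leq\; \sqrt{2(f(\vx)-f^\ast)/\mu},
\]
and squaring yields the claim. The main obstacle is the analytical one of justifying global existence of the gradient flow and convergence to a single limit point: existence on $[0,\infty)$ follows from $\nabla f$ being locally Lipschitz (which holds under the ambient smoothness Assumption \ref{assumption:smooth}), while the upgrade from $\phi(t) \to 0$ to $y(t) \to y_\infty$ is exactly what the finite arclength estimate delivers, so no compactness of sublevel sets is required.
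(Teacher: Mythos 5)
Your proof is correct and is essentially the standard gradient-flow argument from \cite{karimi2016linear}, which is precisely the reference this paper defers to for Fact~\ref{fact:4} (the paper supplies no proof of its own). The only detail worth tidying is the case where $\phi(t)$ vanishes at some finite time $t_0$: the pointwise inequality for $\frac{d}{dt}\sqrt{\phi(t)}$ divides by $\sqrt{\phi(t)}$, so the integration should be carried out on $[0,t_0)$ with $y(t_0)\in\gX^\ast$ serving as the limit point --- a harmless technicality that does not affect the bound.
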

\section{Proof of Lemma \ref{lemma:bound-e}}
\begin{proof}
The result is due to Lemma 5 in \cite{basu2019qsparse} which is inspired by Lemma 3 in \cite{karimireddy2019error}. The proof relies on using the fact that the proposed norm sampling operator $\gC_k$, as shown in Lemma \ref{lemma:norm_sampling} is contractive. Hence, we can use this property to derive a recursive bound on the norm of the memory. Using this result as well as the bounded stochastic gradient assumption results in a geometric sum that can be bounded by the RHS of \eqref{equation:bound-e}.
\end{proof}
\section{Proof of Lemma \ref{lemma:bound-z-simplified}}
\begin{proof}
Since $\Tilde{\vg}_t$ is the $\epsilon$-accurate \textsc{Gm} of $\{\Delta_t^i\}_i$, $\vz_t$ can be thought of as the $\epsilon$-accurate \textsc{Gm} of $\{\Delta_t^i-\Delta_t\}_i$. Thus, by the classical robustness property of \textsc{Gm} (Theorem. 2.2 in \cite{lopuhaa1991breakdown}; see also \cite{minsker2015geometric,cohen2016geometric,chen2017distributed,li2019rsa,wu2020federated} for similar adaptations) we have
\begin{equation}
\label{equation:bound-z0-simplified}
    \E\|\vz_t\|^2\leq \frac{8|\sG|}{(|\sG|-|\sB|)^2}\sum_{i\in \sG}\E\|\Delta_t^i-\Delta_t\|^2+\frac{2\epsilon^2}{(|\sG|-|\sB|)^2}.
\end{equation}
Next, we bound $\E\|\Delta_t^i-\Delta_t\|^2$ using the properties of memory mechanism stated in Lemma \ref{lemma:bound-e} with $H=1$ as:
\begin{equation}
\label{equation:bound-z1-simplified}
\begin{aligned}
  \E\|\Delta_t^i-\Delta_t\|^2&= 2\E\|\Delta_t^i\|^2+2\E\|\Delta_t\|^2 \\&\leq 2\E\|\Delta_t^i\|^2+\frac{2}{|\sG|}\sum_{i\in \sG}\E\|\Delta_t^i\|^2\\&\leq
  4\max_{i\in \sG}\E\|\Delta_t^i\|^2,
  \end{aligned}
\end{equation}
where we used Fact \ref{fact:1} twice. Next, we establish a bound on the norm of the communicated messages as follows. Add add and  subtract $\vp_t^i$ and use the update rule of the memory to obtain
\begin{equation}
\label{equation:bound-z2-simplified}
\begin{aligned}
  \E\|\Delta_t^i\|^2&= \E\|\Delta_t^i+\vp_t^i-\vp_t^i\|^2 \\&= \E\|\vp_t^i- \hat{\vm}_{t+1}\|^2\\&=\E\|\gamma\vg_t^i+\hat{\vm}_{t}-\hat{\vm}_{t+1}\|^2\\&\leq 3\E\|\gamma\vg_t^i\|^2+3\E\|\hat{\vm}_{t}\|^2+3\E\|\hat{\vm}_{t+1}\|^2,
  \\&\leq 3\gamma^2\sigma^2+3\E\|\hat{\vm}_{t}\|^2+3\E\|\hat{\vm}_{t+1}\|^2, 
  \end{aligned}
\end{equation}
by definition of $\vp_t^i$ and Fact \ref{fact:1}. Notice that using Lemma \ref{lemma:bound-e} with $H=1$ we can uniformly bound the last two terms on the RHS of \eqref{equation:bound-z2-simplified}:
\begin{equation}\label{equation:bound-z22-simplified}
\begin{aligned}
  3\E\|\hat{\vm}_{t}\|^2+3\E\|\hat{\vm}_{t+1}\|^2 \leq \frac{24(1-\xi^2)\gamma^2\sigma^2}{\xi^2}.
  \end{aligned}
\end{equation}
Therefore, we conclude
\begin{equation}\label{equation:bound-z5-simplified}
\begin{aligned}
\E\|\Delta_t^i\|^2&\leq3\gamma^2\sigma^2+\frac{24(1-\xi^2)\gamma^2\sigma^2}{\xi^2}.
    \end{aligned}
\end{equation}
Therefore, by \eqref{equation:bound-z1-simplified} and \eqref{equation:bound-z5-simplified}
\begin{equation}
\label{equation:bound-z6-simplified}
\begin{aligned}
  \E\|\Delta_t^i-\Delta_t\|^2& \leq 12\gamma^2\sigma^2\left[1+\frac{4(1-\xi^2)}{\xi^2}\right],
  \end{aligned}
\end{equation}
and the proof is complete by combining this last result with \eqref{equation:bound-z0-simplified}.
\end{proof}
\section{Proof of Lemma \ref{lemma:bound-virtual-simplified}}
\begin{proof}
We derive a recursive relation for the difference $\vx_{t+1} - \tilde{\vx}_{t+1} $.
It follows that
\begin{equation}
\label{equation:bound-virtual:3-simplified}
\begin{aligned}
    \vx_{t+1} &= \vx_t-\Tilde{\vg}_t \\&= \vx_t - \Delta_t-\vz_t.
    \end{aligned}
\end{equation}
On the other hand,
\begin{equation}
\label{equation:bound-virtual:4-simplified}
\begin{aligned}
    \tilde{\vx}_{t+1} &= \tilde{\vx}_{t} -\gamma \vg_t  -\gamma \vz_t \\&=\tilde{\vx}_{t} -\gamma \vg_t -\vz_t \\&=\tilde{\vx}_{t}-\gamma \vg_t-\vz_t,
    \end{aligned}
\end{equation}
Collectively, \eqref{equation:bound-virtual:3-simplified} and \eqref{equation:bound-virtual:4-simplified} imply
\begin{equation}
\label{equation:bound-virtual:5-simplified}
\begin{aligned}
    \vx_{t+1} - \tilde{\vx}_{t+1} &=(\vx_{t} - \tilde{\vx}_{t})+(\gamma\vg_t-\Delta_t).
    \end{aligned}
\end{equation}
Since $\vy_{0} =\tilde{\vy}_{0}$ using induction yields
\begin{equation}
\label{equation:bound-virtual:6-simplified}
\begin{aligned}
    \vx_{t+1} - \tilde{\vx}_{t+1} &=\sum_{j=0}^t(\gamma\vg_t-\Delta_j)\\&=\sum_{j=0}^t(\vm_{j+1}-\vm_j)\\&=\vm_{t+1}-\vm_0=\vm_{t+1},
    \end{aligned}
\end{equation}
where we used the fact that $\vm_0=\mathbf{0}$.
\end{proof}

\section{Proof of Lemma \ref{lemma:recursive_bound}} 
\begin{proof}
Recall that $\vg_t = \frac{1}{|\sG|}\sum_{i\in\sG}\nabla f_i(\vx_t,z_t^i)$, i.e., the average of stochastic gradients ove uncorrupted samples at time $t$, and $\E[\vg_t] = \bar{\vg_t}$. By the definition of $L$-smoothness (see Assumption \ref{assumption:smooth}), we have
\begin{equation}
\label{equation:simplified:thm:1}
\begin{aligned}
    f(\tilde{\vx}_{t+1}) &\leq f(\tilde{\vx}_{t})+ \langle \nabla f(\tilde{\vx}_{t}), \tilde{\vx}_{t+1}-\tilde{\vx}_{t}\rangle+\frac{L}{2}\|\tilde{\vx}_{t+1}-\tilde{\vx}_{t}\|^2\\
    &= f(\tilde{\vx}_{t})+ \langle \nabla f(\tilde{\vx}_{t}), -\gamma \vg_t- \vz_t\rangle+\frac{L}{2}\|\gamma \vg_t+ \vz_t\|^2\\
    &\leq f(\tilde{\vx}_{t}) -\gamma \langle \nabla f(\tilde{\vx}_{t}),  \vg_t \rangle -\langle\nabla f(\tilde{\vx}_{t}), \vz_t\rangle+L\gamma^2\| \vg_t\|^2 + L\| \vz_t\|^2.
    \end{aligned}
\end{equation}
Let $\E_t$ denote expectation with respect to sources of randomness in computation of stochastic gradients at time $t$. Then,
\begin{equation}
\label{equation:simplified:thm:2}
\begin{aligned}
\E_t[f(\tilde{\vx}_{t+1})] &\leq f(\tilde{\vx}_{t})-\gamma \langle \nabla f(\tilde{\vx}_{t}),  \bar{\vg}_t \rangle - \E_t[\langle\nabla f(\tilde{\vx}_{t}),\vz_t\rangle]+L\gamma^2\E_t\| \vg_t\|^2 + L\E_t\|\vz_t\|^2.
 \end{aligned}
 \end{equation}
 The first inner-product in \eqref{equation:simplified:thm:2} can be bounded according to
 \begin{equation}
 \label{equaton:simplified:thm:3}
\begin{aligned}
2\langle \nabla f(\tilde{\vx}_{t}),  \bar{\vg}_t \rangle &= \|\nabla f(\tilde{\vx}_{t})\|^2 + \|\bar{\vg}_t\|^2-\|\nabla f(\tilde{\vx}_{t})-  \bar{\vg}_t\|^2\\
 &\geq  \|\nabla f(\tilde{\vx}_{t})\|^2 -\|\nabla f(\tilde{\vx}_{t})-  \bar{\vg}_t\|^2\\
&= \|\nabla f(\tilde{\vx}_{t})\|^2 -\| \frac{1}{|\sG|}\sum_{i\in\sG}\nabla f_i(\tilde{\vx}_{t})-  \frac{1}{|\sG|} \sum_{i\in\sG} \nabla f_i(\vx_t)\|^2\\
&\geq \|\nabla f(\tilde{\vx}_{t})\|^2 -\frac{1}{|\sG|}\sum_{i\in\sG}\| \nabla f_i(\tilde{\vx}_{t})-  \nabla f_i(\vx_t)\|^2\\
&\geq \|\nabla f(\tilde{\vx}_{t})\|^2 -L^2\|\tilde{\vx}_{t}-  \vx_t\|^2\\
&= \|\nabla f(\tilde{\vx}_{t})\|^2 +L^2\|\tilde{\vx}_{t}-  \vx_t\|^2-2L^2\|\tilde{\vx}_{t}-  \vx_t\|^2\\
&\geq \|\nabla f(\tilde{\vx}_{t})\|^2 +\|\nabla f(\tilde{\vx}_{t})-  \nabla f(\vx_t)\|^2-2L^2\|\tilde{\vx}_{t}-\vx_t\|^2\\
&= \|\nabla f(\tilde{\vx}_{t})\|^2 +\|\nabla f(\tilde{\vx}_{t})-  \nabla f(\vx_t)\|^2-2L^2\|\tilde{\vx}_{t}-\vx_t\|^2\\
&\geq  \frac{1}{2}\|  \nabla f(\vx_t)\|^2-2L^2\|\tilde{\vx}_{t}-\vx_t\|^2,
 \end{aligned}
 \end{equation}
 where we employed  Fact \ref{fact:1} and $L$-smoothness of each function several times. Therefore, 
  \begin{equation}\label{equation:simplified:thm:1inner}
\begin{aligned}
-\gamma\langle \nabla f(\tilde{\vx}_{t}),  \bar{\vg}_t \rangle \leq -\frac{\gamma}{4} \|  \nabla f(\vx_t)\|^2+\gamma L^2\|\tilde{\vx}_{t}-\vx_t\|^2.
 \end{aligned}
 \end{equation}
 We now bound the second inner-product. To this end,
 \begin{equation}
 \label{equation:simplified:thm:2inner}
 \begin{aligned}
     -\E_t[\langle\nabla f(\tilde{\vx}_{t}),\vz_t\rangle] &= -\E_t[\langle \nabla f(\vx_t),\vz_t\rangle]+\E_t[\langle\nabla f(\vx_t)-\nabla f(\tilde{\vx}_{t}),\vz_t\rangle]\\
     &\leq \rho\gamma\|  \nabla f(\vx_t)\|^2+\frac{1}{2\rho\gamma}\E_t\|\vz_t\|^2+\E_t[\langle\nabla f(\vx_t)-\nabla f(\tilde{\vx}_{t}),\vz_t\rangle]\\
     &\leq\frac{\rho\gamma}{2} \|  \nabla f(\vx_t)\|^2+\frac{1}{2\rho\gamma}\E_t\|\vz_t\|^2+\frac{1}{2\gamma}\E_t\|\vz_t\|^2 + \frac{\gamma}{2}\|\nabla f(\vx_t)-\nabla f(\tilde{\vx}_{t})\|^2\\
     &\leq\frac{\rho\gamma}{2} \|  \nabla f(\vx_t)\|^2+\left(\frac{1}{2\rho\gamma}+\frac{1}{2\gamma}\right)\E_t\|\vz_t\|^2+\frac{\gamma L^2}{2}\|\vx_t-\tilde{\vx}_{t}\|^2,
     \end{aligned}
 \end{equation}
 where we used Fact \ref{fact:2} twice and to obtain the last inequality we employed the smoothness assumption. Here, $0<\rho<0.5$ is a parameter whose value will be determined later.
 
Application of \eqref{equation:simplified:thm:1inner} and \eqref{equation:simplified:thm:2inner} in  \eqref{equation:simplified:thm:2} yields
\begin{equation}
\label{equation:simplified:thm:4}
\begin{aligned}
\E_t[f(\tilde{\vx}_{t+1})] &\leq f(\tilde{\vx}_{t})-\left(\frac{1}{2}-\rho\right)\frac{\gamma}{2} \|  \nabla f(\vx_t)\|^2+\frac{3\gamma L^2}{2}\|\tilde{\vx}_{t}-\vx_t\|^2\\
&\qquad\qquad+L\gamma^2\E_t\|\vg_t\|^2+ \left(L+\frac{1}{2\rho\gamma}+\frac{1}{2\gamma}\right)\E_t\|\vz_t\|^2.
    \end{aligned}
\end{equation}
\end{proof}

\section{Proof of Theorem \ref{theorem:nonconvex-simple}}
\begin{proof} 
Rearranging \eqref{equation:simplified:thm:4} and taking expectation with respect to the entire sources of randomness, i.e. the proposed coordinated sparse approximation and the randomness in computation of stochastic gradient in iterations $0,\dots,t-1$, using the bounded SFO assumption yields 
\begin{equation}
\label{equation:simplified:thm:5}
\begin{aligned}
\left(\frac{1}{2}-\rho\right)\frac{\gamma}{2}\E\|\nabla f(\vx_t)\|^2 \leq &\E[f(\tilde{\vx}_{t})]-\E[f(\tilde{\vx}_{t+1})]+\frac{3\gamma L^2}{2}\E\|\tilde{\vx}_{t}-\vx_t\|^2\\
& + L\gamma^2\sigma^2+ \left(L+\frac{1}{2\rho\gamma}+\frac{1}{2\gamma}\right)\E\|\vz_t\|^2.
    \end{aligned}
\end{equation}
Evidently, we need to bound two quantities in \eqref{equation:simplified:thm:5}. Lemma \ref{lemma:bound-z-simplified} establishes a bound on $\E\|\vz_t\|^2$ while by Lemma \ref{lemma:bound-virtual-simplified}
\begin{equation}\label{equation:simplified:lemma7_replacement}
    \begin{aligned}
    \E\|\tilde{\vx}_{t}-\vx_t\|^2 = \E\|\vm_t\|^2 &\leq \frac{1}{|\sG|} \sum_{i\in \sG} \E\|\hat{\vm}_t\|^2\leq \frac{4(1-\xi^2)\gamma^2\sigma^2}{\xi^2}
    \end{aligned}
\end{equation}
using Lemma \ref{lemma:bound-e} with $H=1$ we get :
\begin{equation}
\label{equation:simplified:thm:6}
\begin{aligned}
\left(\frac{1}{2}-\rho\right)\frac{\gamma}{2}\E\|\nabla f(\vx_t)\|^2 \leq &\E[f(\tilde{\vx}_{t})]-\E[f(\tilde{\vx}_{t+1})]+L\gamma^2\sigma^2\\
&+\frac{6\gamma L^2(1-\xi^2)\gamma^2\sigma^2}{\xi^2}
\\
&+ \left(L+\frac{1}{2\rho\gamma}+\frac{1}{2\gamma}\right) \frac{96\gamma^2\sigma^2}{(1-\alpha)^2}\left[1+\frac{4(1-\xi^2)}{\xi^2}\right]\\
&+ \left(L+\frac{1}{2\rho\gamma}+\frac{1}{2\gamma}\right)\frac{2\epsilon^2}{|\sG|^2(1-\alpha)^2}.
    \end{aligned}
\end{equation}
Finally, letting $\rho = 1/4$ and $\gamma = \frac{1}{2L}$ we obtain the stated result by averaging \eqref{equation:simplified:thm:6} over time and noting $f^\ast\leq\E[f(\tilde{\vx}_{T})]$ and $f(\tilde{\vx}_{0}) = f(\vx_0)$. 
\end{proof}

\section{Proof of Theorem \ref{theorem:plc-simple}}
Recall \eqref{equation:simplified:thm:5} in the proof of Theorem \ref{theorem:nonconvex-simple}. Let $a_{t} :=\E[f(\tilde{\vx}_{t})] -f^\ast \geq 0$. Using Assumption \ref{assumption:plc} to bound the gradient terms in \eqref{equation:simplified:thm:5} yields
\begin{equation}\label{equation:simplified:thm:plc1}
\begin{aligned}
a_{t+1} \leq & a_t - \left(\frac{1}{2}-\rho\right)\gamma\mu  \E[f(\vx_t)-f^\ast] + \frac{3\gamma L^2}{2} \E\|\tilde{\vx}_{t}-\vx_t\|^2\\
& + L\gamma^2\sigma^2+ \left(L+\frac{1}{2\rho\gamma}+\frac{1}{2\gamma}\right)\E\|\vz_t\|^2.
    \end{aligned}
\end{equation}
The above result is not sufficient to complete the proof since $a_{t}$ is with respect to the virtual sequence. This difficulty, which is addressed for the first time in this paper,  has been the main challenge in the proof of convergence of error compensated schemes with biased gradient compression under PLC. To deal with this burden, we revisit \eqref{equation:simplified:thm:2} to establish an alternative bound. Specifically, we to bound the first inner-product in \eqref{equation:simplified:thm:2} we instead establish
\begin{equation}
\label{equation:simplified:thm:plc2}
\begin{aligned}
-\gamma \langle \nabla f(\tilde{\vx}_{t}),  \bar{\vg}_t \rangle &= -\gamma \langle \nabla f(\tilde{\vx}_{t}),  \nabla f(\tilde{\vx}_{t}) \rangle+\gamma \langle \nabla f(\tilde{\vx}_{t}),  \nabla f(\tilde{\vx}_{t})-\bar{\vg}_t \rangle\\
&\leq -\gamma\| \nabla f(\tilde{\vx}_{t})\|^2+\frac{\gamma\rho_1}{2}\| \nabla f(\tilde{\vx}_{t})\|^2+\frac{\gamma}{2\rho}\|\nabla f(\tilde{\vx}_{t})-\bar{\vg}_t\|^2\\
&= -\gamma(1-\frac{\rho_1}{2})\| \nabla f(\tilde{\vx}_{t})\|^2+\frac{\gamma}{2\rho_1}\|\nabla f(\tilde{\vx}_{t})-\bar{\vg}_t\|^2
    \end{aligned}
\end{equation}
where we used Fact \ref{fact:2} with parameter $\rho_1>0$ which will be determined later. Recall that by smoothness
\begin{equation}
\label{equation:simplified:thm:plc3}
\begin{aligned}
\|\nabla f(\tilde{\vx}_{t})-\bar{\vg}_t\|^2\leq L^2 \|\tilde{\vx}_{t}- \vx_t\|^2.
 \end{aligned}
 \end{equation}
 We now derive a bound for the second inner-product in \eqref{equation:simplified:thm:2}. To do so, an application of Fact \ref{fact:2} yields 
 \begin{equation}
 \label{equation:simplified:thm:plc4}
\begin{aligned}
-\E_t[\langle\nabla f(\tilde{\vx}_{t}),\vz_t\rangle]\leq \frac{\rho_2\gamma}{2}\|\nabla f(\tilde{\vx}_{t})\|^2+\frac{1}{2\gamma\rho_2}\E_t\|\vz_t\|^2.
 \end{aligned}
 \end{equation}
 Therefore, in light of these new bounds we obtain
\begin{equation}
\label{equation:simplified:thm:plc5}
\begin{aligned}
\E_t[f(\tilde{\vx}_{t+1})] &\leq f(\tilde{\vx}_{t})-\gamma\left(1-\frac{\rho_1+\rho_2}{2}\right)\|\nabla f(\tilde{\vx}_{t})\|^2+L\gamma^2\E_t\| \vg_t\|^2\\
&+ \left(L+\frac{1}{2\gamma\rho_2}\right)\E_t\|\vz_t\|^2+\frac{\gamma L^2}{2\rho_1} \|\tilde{\vx}_{t}- \vx_t\|^2.
 \end{aligned}
 \end{equation}
Subtracting $f^\ast$ and taking expectation with respect to the entire sources of randomness yields
\begin{equation}
\label{equation:simplified:thm:plc6}
\begin{aligned}
\E[f(\tilde{\vx}_{t+1})]-f^\ast &\leq \E[f(\tilde{\vx}_{t})]-f^\ast-\gamma\left(1-\frac{\rho_1+\rho_2}{2}\right)\E\|\nabla f(\tilde{\vx}_{t})\|^2+L\gamma^2\sigma^2\\
&+ \left(L+\frac{1}{2\gamma\rho_2}\right)\E\|\vz_t\|^2+\frac{\gamma L^2}{2\rho_1} \E\|\tilde{\vx}_{t}- \vx_t\|^2.
 \end{aligned}
 \end{equation}
The gradient term $\E\|\nabla f(\tilde{\vx}_{t})\|^2$ in \eqref{equation:simplified:thm:plc6} can be related to $a_{t}$ using the PL condition. Therefore
\begin{equation}
\label{equation:simplified:thm:plc7}
\begin{aligned}
a_{t+1} \leq &\left[1-2\mu\gamma\left(1-\frac{\rho_1+\rho_2}{2}\right)\right]a_t+L\gamma^2\sigma^2\\
&+ \left(L+\frac{1}{2\gamma\rho_2}\right)\E\|\vz_t\|^2+\frac{\gamma L^2}{2\rho_1} \E\|\tilde{\vx}_{t}- \vx_t\|^2.
 \end{aligned}
 \end{equation}
 Multiply \eqref{equation:simplified:thm:plc1} and \eqref{equation:simplified:thm:plc7} by $1/2$ and add the two to obtain
 \begin{equation}\label{equation:simplified:thm:plc8}
\begin{aligned}
a_{t+1} \leq &\left[1-\mu\gamma\left(1-\frac{\rho_1+\rho_2}{2}\right)\right]a_t+L\gamma^2\sigma^2-\left(\frac{1}{2}-\rho\right)\frac{\gamma\mu}{2}  \E[f(\vx_t)-f^\ast]\\
&\qquad+ \left(L+\frac{1}{4\gamma\rho}+\frac{1}{4\gamma\rho_2}+\frac{1}{4\gamma}\right)\E\|\vz_t\|^2\\
&\qquad+\left(3+\frac{1}{\rho_1}\right)\frac{\gamma L^2}{4} \E\|\tilde{\vx}_{t}- \vx_t\|^2.
 \end{aligned}
 \end{equation}
 Define
 \begin{equation}\label{equation:simplified:thm:plc9}
     b_t :=   \E[f(\vx_t)-f^\ast].
 \end{equation}
 Let $\rho=1/4$, $\rho_1=1/2$, $\rho_2=1/2$, and $\gamma = 1/4L$. 
 Using Lemma \ref{lemma:bound-z-simplified} and \eqref{equation:simplified:lemma7_replacement}, \eqref{equation:simplified:thm:plc8} simplifies to
\begin{equation}\label{equation:simplified:thm:plc10}
\begin{aligned}
a_{t+1} &\leq \left[1-\frac{\mu\gamma}{2}\right]a_t-\frac{\gamma\mu}{8} b_t+L\gamma^2\sigma^2+\frac{5 L^2}{4} \gamma^3 \sigma^2\left[\frac{4(1-\xi^2)}{\xi^2}\right]\\
&\qquad+\frac{192\gamma  \sigma^2}{(1-\alpha)^2}\left[1+\frac{4(1-\xi^2)}{\xi^2}\right]+\frac{4\epsilon^2}{\gamma |\sG|^2(1-\alpha)^2}.
 \end{aligned}
 \end{equation}
 Thus we can apply Fact \ref{fact:3} to obtain
 \begin{equation}
 \label{equation:simplified:thm:plc11}
\begin{aligned}
\frac{1}{W_T}\sum_{t=0}^{T-1}b_tw_t &\leq \frac{8(f(\vx_0)-f^\ast)}{\mu\gamma}\left[1-\frac{\mu\gamma}{2}\right]^T+\frac{8L\gamma\sigma^2}{\mu}+\frac{10 L^2}{\mu} \gamma^2 \sigma^2\left[\frac{4(1-\xi^2)}{\xi^2}\right]\\
&+\frac{1536  \sigma^2}{\mu(1-\alpha)^2}\left[1+\frac{4(1-\xi^2)}{\xi^2}\right]+\frac{32\epsilon^2}{\mu\gamma^2 |\sG|^2(1-\alpha)^2},
 \end{aligned}
 \end{equation}
 where we used $E[f(\tilde{\vx}_0^i)] = f(\vx_0)$.
 Finally, to obtain the stated result we invoke Fact \ref{fact:4} and the assumption that the solution set (i.e., the set of all stationary points) is convex. Specifically, by the quadratic  growth condition and convexity of the Euclidean norm 
\begin{equation}
\label{equation:simplified:thm:plc12}
\begin{aligned}
  \sum_{t=0}^{T-1}\frac{w_t}{W_T}\E[f(\vx_t)-f^\ast] &\geq   \sum_{t=0}^{T-1}\frac{w_t}{W_T} \E\|\vx_t-\gP_{\gX^\ast}(\vx_t)\|^2\\
&\geq \frac{\mu}{2} \sum_{t=0}^{T-1}\frac{w_t}{W_T} \E\|\vx_t-\gP_{\gX^\ast}(\vx_t)\|^2\\
&\geq \frac{\mu}{2}  \E\|\sum_{t=0}^{T-1}\frac{w_t}{W_T}\vx_t-\sum_{t=0}^{T-1}\frac{w_t}{W_T}\gP_{\gX^\ast}(\vx_t)\|^2\\
&:=\frac{\mu}{2}  \E\|\hat{\vx}_T-\vx^{\ast}\|^2,
 \end{aligned}
 \end{equation}
 where $\gP_{\gX^\ast}(\vx_t)$ is the projection of $\vx_t$ onto the solution set $\gX^\ast$ and $\vx^{\ast}:=\sum_{t=0}^{T-1}\frac{w_t}{W_T}\gP_{\gX^\ast}(\vx_t) \in \gX^\ast$ by the assumption that the solution set is convex.
\clearpage
\bibliographystyle{alpha}
\bibliography{bibs/robust_estimation, bibs/robust_sgd, bibs/co_decent, bibs/compression,
bibs/het, bibs/batch_sgd, bibs/sparse_approx, bibs/misc_exp, bibs/attack_models, bibs/sketching}
\end{document}